\documentclass{article} 
\usepackage{iclr2026_conference,times}

\usepackage{amsmath,amsfonts,bm}

\def\eqref#1{equation~\ref{#1}}

\def\1{\bm{1}}

\DeclareMathAlphabet{\mathsfit}{\encodingdefault}{\sfdefault}{m}{sl}
\SetMathAlphabet{\mathsfit}{bold}{\encodingdefault}{\sfdefault}{bx}{n}

\usepackage{hyperref}
\usepackage{url}
\usepackage{graphicx}
\usepackage{subcaption}
\usepackage{tcolorbox}
\usepackage{wrapfig}
\usepackage[ruled,algo2e]{algorithm2e}
\usepackage{algorithm}
\usepackage{amsthm}
\usepackage{amsmath}
\usepackage[table]{xcolor}
\usepackage{etoc}
\usepackage{booktabs}
\usepackage{multirow}
\usepackage{tabularx}
\usepackage{makecell}
\usepackage{enumitem}
\usepackage{CJKutf8}
\usepackage{marvosym}

\SetCommentSty{graycomment}

\usepackage[normalem]{ulem} 
\newcommand{\rebuttal}[1]{{#1}}

\newtheorem{theorem}{Theorem}
\newtheorem{definition}{Definition}

\title{{SERE: Similarity-based Expert Re-routing for Efficient Batch Decoding in MoE Models}}

\author{
    Juntong Wu\textsuperscript{1,2,*},
    Jialiang Cheng\textsuperscript{1,*,$\dagger$, \Letter},
    Fuyu Lv\textsuperscript{1},
    Ou Dan\textsuperscript{1},
    Li Yuan\textsuperscript{2, \Letter} \\
    \textsuperscript{1} Taobao \& Tmall Group of Alibaba \\
    \textsuperscript{2} Shenzhen Graduate School, Peking University \\
    \small{
   \textbf{Correspondence:} \href{mailto:jichen.cjl@alibaba-inc.com}{jichen.cjl@alibaba-inc.com},
   \href{mailto:yuanli-ece@pku.edu.cn}{yuanli-ece@pku.edu.cn} }
    \thanks{\textsuperscript{*} Equal contribution \quad \textsuperscript{$\dagger$} Project Lead  \quad \textsuperscript{\Letter} \ Corresponding author}
}

\makeatletter
\def\thanks#1{\protected@xdef\@thanks{\@thanks
        \protect\footnotetext{#1}}}
\makeatother

\iclrfinalcopy 
\begin{document}


\maketitle

\vspace{-4.5mm}

\begin{abstract}

Mixture-of-Experts (MoE) architectures employ sparse activation to deliver faster training and inference with higher accuracy than dense LLMs. However, in production serving, MoE models require batch inference to optimize hardware efficiency, which may cause excessive expert activation and thus slow the memory-bound decoding stage. \rebuttal{To address the fundamental tension between batch decoding and expert sparsity}, we present \textbf{SERE}, a \textbf{S}imilarity-based \textbf{E}xpert \textbf{R}e-routing method for \textbf{E}fficient batch decoding in MoE models. SERE dynamically reduces \rebuttal{the number of} active experts in an input‑aware manner by re-routing tokens from secondary experts to their most similar primary counterparts. It also leverages similarity patterns to identify and preserve critical experts, thereby preventing capability loss. \rebuttal{Notably, SERE avoids static expert pruning or merging, instead enabling dynamic expert skipping based on batch-level expert redundancy.} Additionally, we provide an efficient custom CUDA kernel for SERE, enabling plug-and-play use in vLLM with only a single‑line code change.\footnote{~Code implementation of SERE can be found in~\url{https://github.com/JL-Cheng/SERE}.} Extensive experiments on various complex reasoning benchmarks demonstrate that SERE achieves up to $2.0\times$ speedup with minimal quality loss, providing a practical solution for cost-efficient and latency-sensitive large-scale MoE deployment. 


\vspace{-1mm}

\end{abstract}

\section{Introduction}

\begin{wrapfigure}{!r}{0.38\textwidth}
  \centering
  \vspace{-3ex}
  \includegraphics[width=\linewidth]{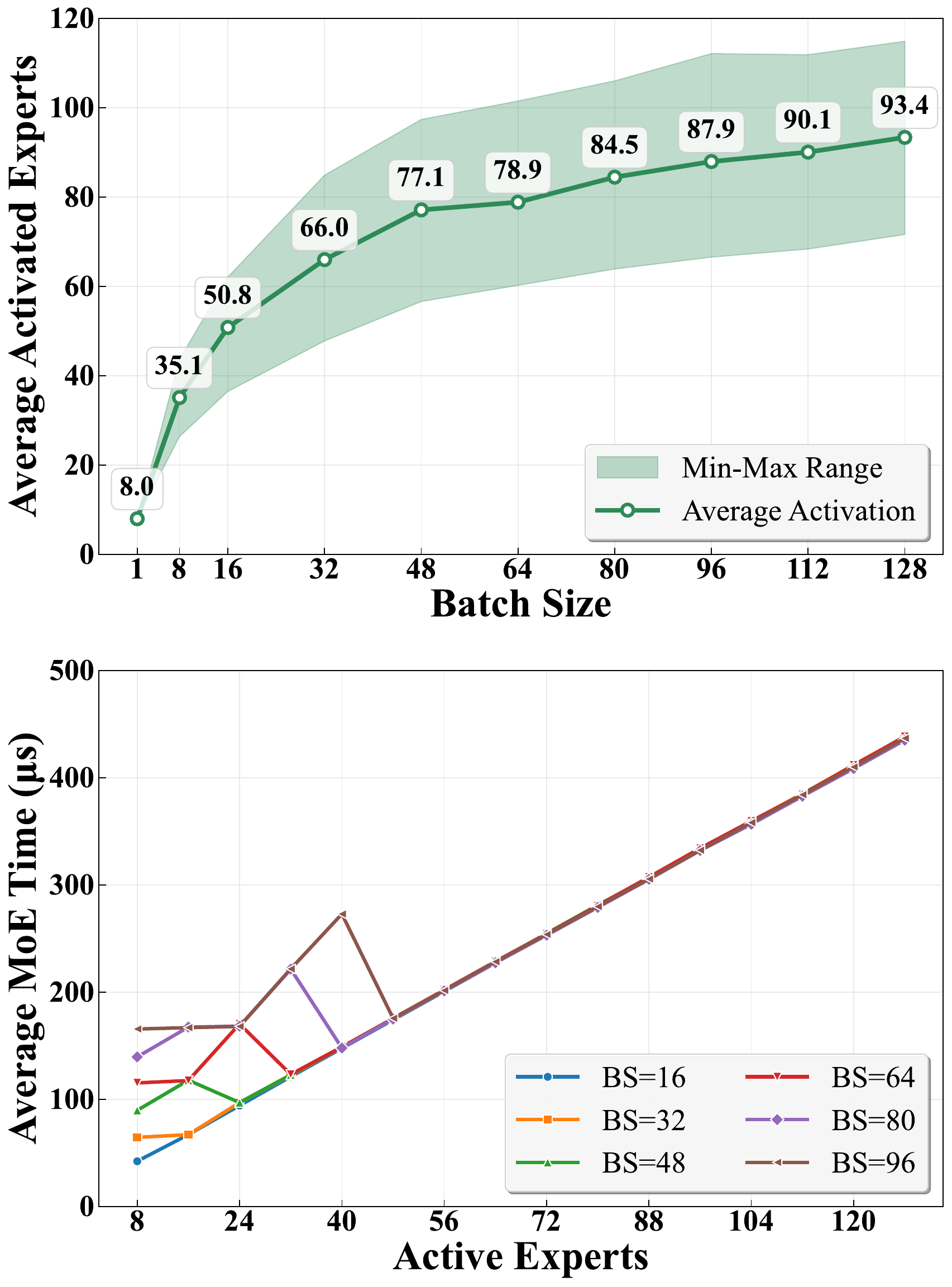}
  \vspace{-4mm}
  \caption{Larger batches activate more experts. With a fixed batch size, more experts increase decoding time.}
  \vspace{-5ex}
  \label{fig:combined_expert_analysis}
\end{wrapfigure}

Large Language Models (LLMs) have shown remarkable performance across various applications. Recently, the Mixture-of-Experts (MoE) paradigm has emerged as a leading framework for scaling LLMs \citep{yang2025qwen3,liu2024deepseek,touvron2023llama,jiang2024mixtral}. Unlike dense LLMs that activate the entire feed-forward network (FFN) for every token, an MoE layer consists of multiple lightweight FFN experts, where a learnable router assigns each token to a small subset. By maintaining low per‑token computation, sparse activation enables the model to incorporate numerous specialized experts, scaling its capacity while preserving training and inference efficiency.

Despite the theoretical efficiency of MoE architectures, their practical gains are often limited by a mismatch between selective activation and batched inference \citep{kwon2023efficient,agrawal2024taming,gupta2024lynx}. In real-world services, multiple user requests are batched to improve hardware utilization \citep{kwon2023efficient}. However, tokens within a batch often require different experts, leading to a total number of activated experts far above the per‑token budget~\citep{agrawal2024taming,yun2024toward}. As depicted in Figure \ref{fig:combined_expert_analysis}, even with strict limits (e.g., 8 out of 128 in Qwen3‑30B‑A3B~\citep{yang2025qwen3}), a moderately diverse batch can still activate a majority of the experts simultaneously. Moreover, the training-time load-balancing objectives further increase the expert diversity within a batch \citep{lepikhin2020gshard,liu2024deepseek}. 
This issue is particularly acute during decoding \citep{yun2024toward}, where sequential token generation makes the process memory‑bandwidth‑bound. As also can be seen from Figure \ref{fig:combined_expert_analysis}, activating excessive experts during decoding raises communication and memory‑access overhead and thus increases latency. 
Addressing the conflict between batched inference and sparse expert activation is therefore crucial for unlocking the practical scalability of MoE architectures \citep{zoph2022st, liu2024deepseek}.

\begin{figure}[t]
  \centering
    \begin{subfigure}{0.48\textwidth}
    \centering
    \includegraphics[width=.94\linewidth]{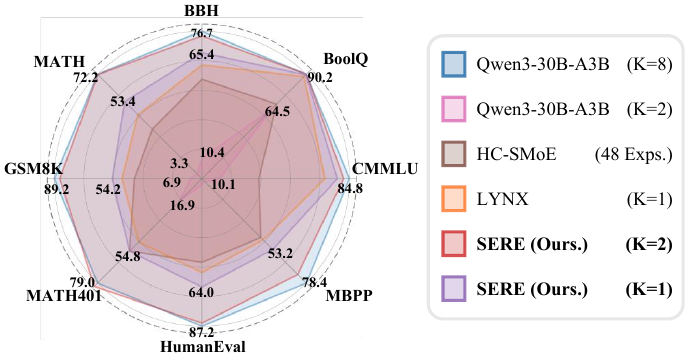}
    \caption{Accuracy Maintained}
  \end{subfigure}
  \begin{subfigure}{0.48\textwidth}
    \centering
    \includegraphics[width=.97\linewidth]{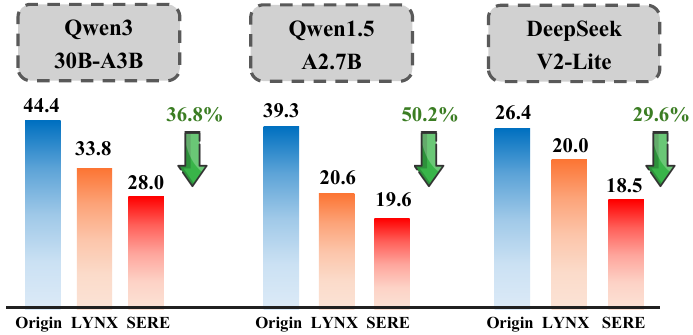}
    \caption{Time Per Output Token (ms)}
  \end{subfigure}
  \caption{{Visualizations of SERE’s Performance. (a) Across all tasks, SERE ($K$=2) exhibits negligible performance loss, while SERE ($K$=1) still outperforms all baselines. (b) SERE significantly reduces batch decoding time, achieving up to 2$\times$ acceleration.}}
  \vspace{-4mm}
  \label{fig: intro_results}
\end{figure}

To address the problem mentioned above, various expert‑reduction methods are proposed, which can generally be classified into static model compression and dynamic expert skipping.
Static methods typically remove or merge experts in a fixed, pre‑defined manner \citep{yang2024moe,liu2024efficient,chen2025retrainingfree,ai2025resmoe}. While these methods can efficiently reduce the memory footprint, they typically involve significant computational costs, rely on task-specific insights, and might reduce the model's capacity and ability to generalize. Dynamic methods modify expert activation at runtime based on token‑level signals \citep{zhong2024adapmoe,huang2024harder,lu2024not,gupta2024lynx,yang2025faster}. These methods depend solely on router scores, overlook intrinsic expert characteristics, and often require extra training or threshold tuning. Moreover, their complex token-by-token operations or modification of the decoding process hinder integration with high-performance inference frameworks, such as vLLM~\citep{kwon2023efficient}, which limits their practicality in large-scale deployment.

Starting with these observations, we propose SERE, a \textbf{S}imilarity-based \textbf{E}xpert \textbf{R}e-routing method for \textbf{E}fficient batch decoding in MoE models. SERE is motivated by three key observations. 
First, many experts within an MoE layer exhibit high functional similarity. Therefore, SERE re‑routes tokens from a subset of experts to their most similar counterparts, reducing the number of active experts with minimal capacity loss.
Second, a small set of high‑ranked primary experts dominate gating weights and output contributions, whereas secondary experts contribute little. SERE retains all primary experts and only re‑routes secondary ones, thereby preserving dominant contributors while minimizing redundancy.
Third, certain critical experts are highly dissimilar to others and specialize in unique input patterns. SERE preserves these experts to prevent capability degradation during re‑routing. 
In summary, SERE employs a dynamic, input‑aware strategy that jointly considers token characteristics and inter‑expert similarity, skipping more experts when redundancy is high and fewer when diversity is essential for accuracy.
The expert similarity matrix is pre‑computed once from a general calibration set, requiring no retraining or task‑specific tuning. 
For deployment, we implement an efficient custom CUDA kernel for SERE that can be seamlessly integrated into the widely used vLLM framework \citep{kwon2023efficient}, enabling plug‑and‑play use with only a single‑line code change.

The contributions of our work are summarized as follows:
\begin{enumerate}[leftmargin=12pt,itemsep=1pt]
    \item We propose SERE, a similarity-based expert re-routing method for accelerating batch decoding in MoEs. SERE significantly reduces the number of active experts while maintaining model performance, enabling faster decoding.
    \item We develop an efficient, plug-and-play CUDA kernel for SERE that works with various MoE models and can be easily integrated into the vLLM framework  \citep{kwon2023efficient}.
    \item We perform extensive experiments on multiple state-of-the-art MoE models. \citep{bai2023qwen, liu2024deepseekv2, yang2025qwen3}. As shown in Figure \ref{fig: intro_results}, SERE achieves up to $2.0\times$ speedup with minimal impact on output quality.
\end{enumerate}

\section{Related Work}

Recent work on expert reduction can be mainly divided into two categories: static model compression and dynamic expert skipping.

\textbf{Static Model Compression} methods leverage redundancy among experts to perform pruning or merging operations. 
For example, MoE-I$^2$ \citep{yang2024moe} reduces the size of MoE models via a two-stage process of inter-expert pruning and intra-expert low-rank decomposition.
EEP~\citep{liu2024efficient} employs an evolutionary search that prunes experts and merges their knowledge into the remaining subsets.
HC-SMoE~\citep{chen2025retrainingfree} applies hierarchical clustering based on expert similarity to iteratively merge similar experts.
Other approaches, such as DeRS \citep{zhang2024diversifying}, D$^2$-MoE \citep{gu2025delta}, and ResMoE \citep{ai2025resmoe}, represent experts with shared weights augmented by low-rank residuals.
While effective in reducing model size, these methods often incur high computation costs, rely heavily on calibration data and task-specific priors, and risk reducing the model’s capacity and generalization ability due to decreased expert diversity.

\textbf{Dynamic Expert Skipping} aims to reduce the number of activated experts during inference dynamically. 
For instance, Top-$p$ routing \citep{huang2024harder} selects experts dynamically based on the confidence scores for each input. 
AdaMoE \citep{zhong2024adapmoe} and MoE++ \citep{jin2024moe++} enable token‑adaptive routing via introducing null experts.
\citeauthor{yang2025faster} proposes a layer-wise and fine-grained top-$k$ reduction strategy to improve inference efficiency.
NAEE~\citep{lu2024not} skips less critical experts via token‑wise analysis of router weights, and LYNX~\citep{gupta2024lynx} employs batch‑aware confidence estimation to filter out less relevant experts for unimportant tokens.
While effective in reducing computation, these methods often require extra training, operate at coarse granularity, and overlook intrinsic expert characteristics by relying solely on router scores. Their per-token operations also incur overhead and are challenging to integrate with high-performance inference frameworks, limiting their practical benefits in large-scale deployment.

\section{Method}

\begin{figure}[!htp]
    \centering
    \includegraphics[width=1.0\textwidth]{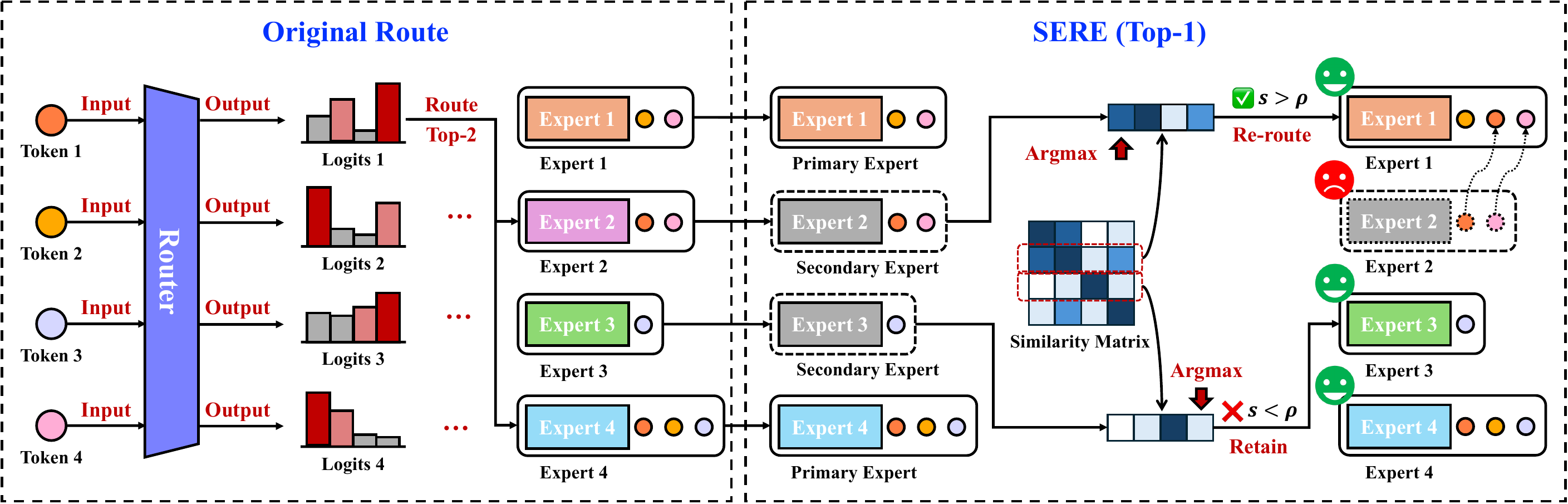}
    \caption{Illustration of SERE with $4$ tokens and $4$ experts as example. Tokens are first routed to top-$2$ experts. SERE preserves the primary experts (1 and 4) and re-routes the secondary experts (2 and 3). As a result, Expert 2 is replaced by Expert 1, while Expert 3 remains active as its similarity to all active experts falls below the threshold.}
    \label{fig:pipeline}
\end{figure}

To accelerate batched decoding in MoE models, we propose SERE, a dynamic, input-aware expert skipping method. As illustrated in Figure \ref{fig:pipeline}, SERE preserves the primary experts for all tokens as well as the critical experts within each layer, and re-routes tokens from secondary experts to their most similar retained counterparts. This dynamic strategy achieves substantial decoding speedups while maintaining model performance.
In the remainder of this section, we introduce the design motivations and technical components of SERE. We begin with \textbf{expert similarity estimation} (Sec. \ref{sec:sim_estimation}), then describe the \textbf{similarity-based dynamic re-routing mechanism} (Sec. \ref{sec:rerouting_mathod}), and finally present the \textbf{implementation of a high-performance CUDA kernel} for integration into large-scale inference frameworks (Sec. \ref{sec:high_performance}).

\subsection{Expert Similarity Estimation}
\label{sec:sim_estimation}

\subsubsection{Similarity Matrix Computation}
\label{sec:similarity_matrix_comp}

We adopt a data-driven approach to measure expert similarity in MoE models. Consider an MoE model with $L$ layers, where each layer $l$ contains $M$ experts $\{\mathbf{E}^{(l)}_{1},\dots,\mathbf{E}^{(l)}_{M}\}$. Using a calibration dataset $\mathcal{D}_{\mathrm{calib}}$, we process $N$ batches and aggregate the results to obtain robust similarity estimates. For each batch $i \in [1,N]$, let $\mathbf{X}^{(0)}_i$ denote the input embeddings. In each layer $l$, expert activations are obtained as $\mathbf{A}^{(l)}_{i,j} = \mathbf{E}^{(l)}_{j} \left( \mathbf{X}^{(l-1)}_{i} \right)$, after which pairwise similarities are computed via a predefined similarity function $\mathrm{Sim}(\cdot,\cdot)$: 

\begin{equation} 
\mathbf{S}^{(l)}_{p,q} \mathrel{+}= \mathrm{Sim}\left( \mathbf{A}^{(l)}_{i,p},\, \mathbf{A}^{(l)}_{i,q} \right), \quad 1 \le p,q \le M .
\end{equation}

Common choices of $\mathrm{Sim}(\cdot,\cdot)$ include Cosine Similarity, Frobenius norm, and centered kernel alignment (CKA) \citep{kornblith2019similarity}. More details can be found in Appendix \ref{sec:similarity-metrics}.

After all $N$ iterations, the accumulated similarity matrices are normalized to obtain the average layer-wise similarity: $\mathbf{S}^{(l)} = \mathbf{S}^{(l)} / N$. The resulting set $\{\mathbf{S}^{(l)}\}_{l=1}^{L}$ provides a quantitative view of the similarity relationships between experts within the same layer. High similarity values indicate potentially redundant experts, while low values reflect diverse expert specialization. The pseudocode is provided in Algorithm~\ref{algo:expery_similarity} in Appendix~\ref{sec:pesudocode}.

\subsubsection{Similarity Matrix Insights}
\label{sec:sim_matrix_insights}

\begin{figure}[h]
    \centering
    \includegraphics[width=0.95\textwidth]{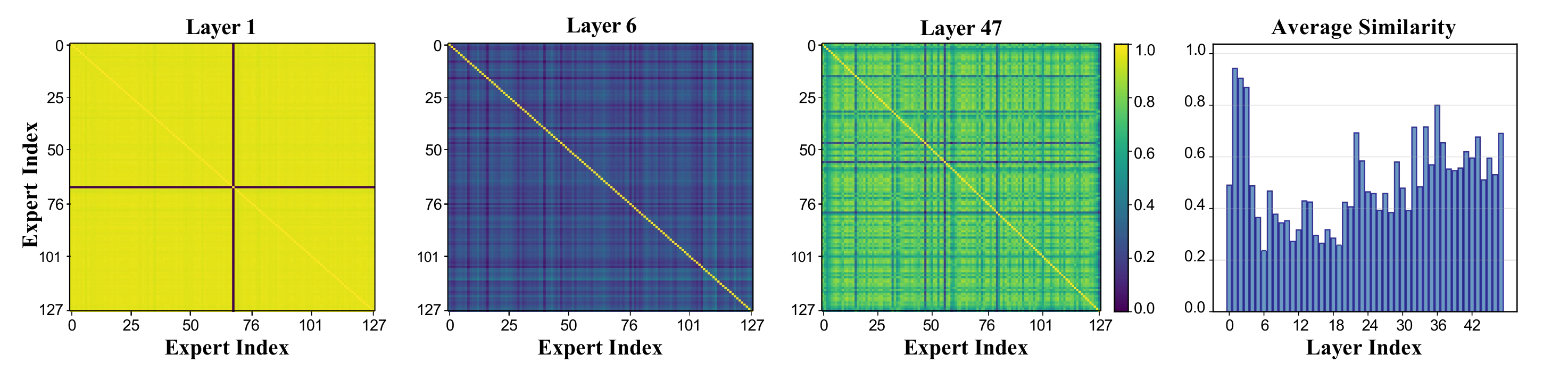}
    \caption{Visualization of the expert similarity matrices and the average expert similarity across all layers in Qwen3-30B-A3B \citep{yang2025qwen3}.}
    \label{fig:similarity_analysis}
    \vspace{-2mm}
\end{figure}

We computed the expert similarity matrices for all layers of the Qwen3‑30B‑A3B model~\citep{yang2025qwen3}, with representative heatmaps and layer‑wise average statistics shown in Fig.~\ref{fig:similarity_analysis}. The results reveal three notable patterns. 
First, within each layer, groups of experts exhibit consistently high pairwise similarity, indicating functional redundancy.
Second, similarity patterns vary substantially across layers — Layer-1 has the highest average similarity, with nearly all pairs above $0.9$, while Layer-6 has the lowest average similarity, with most pairs below $0.4$. 
Third, every layer contains \emph{critical} experts whose similarity to all others is exceptionally low, as indicated by heatmaps that display distinct horizontal and vertical stripes. Even in Layer 1, Expert 92 stands out as a critical expert, with a similarity of less than $0.1$ to all others. These observations illustrate the balance between redundancy and specialization in MoE architectures, highlighting that certain experts contribute uniquely to model capacity while others may provide overlapping functionality. 
\rebuttal{More visualization results of expert similarity matrices for different MoE models are provided in Appendix~\ref{sec:appendix-similarity-matrices}. These results demonstrate that high expert similarity is common in MoE models, regardless of whether upcycling initialization~\citep{komatsuzaki2022sparse} is employed.}

\begin{tcolorbox}[colback=blue!10!white, colframe=blue!75!black, rounded corners,]
\footnotesize
\textbf{Key Insights:} \vspace{1mm} \\
\textbf{1. Layer-wise redundancy:} Within each layer, groups of experts exhibit high pairwise similarity.\vspace{1mm} \\
\textbf{2. Cross-layer variation:} Average expert similarity varies substantially across layers. \vspace{1mm} \\
\textbf{3. Critical experts:} Each layer contains critical experts with uniformly low similarity to all others.
\end{tcolorbox}

\subsection{Similarity-based Expert Re-routing Mechanism}
\label{sec:rerouting_mathod}

\begin{wrapfigure}{r}{0.38\textwidth}
  \centering
  \includegraphics[width=0.95\linewidth]{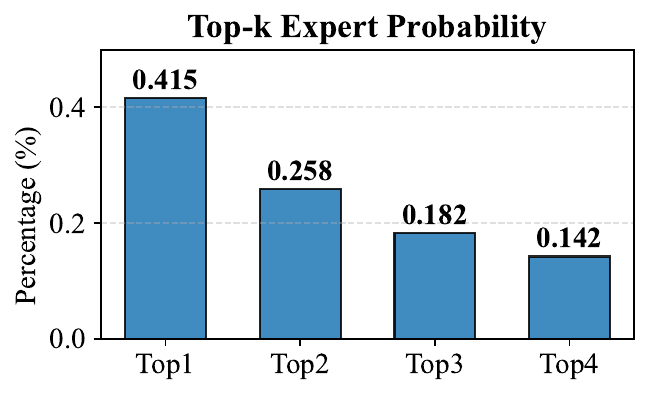}
  \vspace{-1ex}
  \caption{Weights Distribution}
  \vspace{-2ex}
  \label{fig:expert_weights_distribution}
\end{wrapfigure}

\subsubsection{Design Motivation}

To accelerate batch decoding in MoE models by reducing the number of active experts, two key questions arise: (1) \emph{Which active experts should be skipped?} and (2) \emph{How should they be handled?}

For the first question, analysis of router weights distribution (Fig.~\ref{fig:expert_weights_distribution}) reveals that top‑ranked (\emph{primary}) experts dominate output activations and should therefore be retained, whereas low‑ranked (\emph{secondary}) experts contribute less and are natural skip candidates.
To address the second question, we leverage insights from Sec.~\ref{sec:sim_matrix_insights}. Because layers contain groups of highly similar experts, tokens from a skipped secondary expert can be re‑routed to its most similar retained primary expert, thus mitigating disruption to output activations. However, the analysis also identifies critical experts whose removal would degrade performance. We therefore introduce a similarity threshold that ensures such critical experts are always retained.

\subsubsection{Re-routing Process}

Building upon the observations and motivation, we now present our SERE method in detail. 
Let $\mathbf{R}^{(l)}(\cdot)$ denote the router function in layer $l$ of the MoE model. For a token $t \in \mathcal{T}$, $R^{(l)}(t) = (\mathbf{E}^{(l)}_{r_{1}}, \mathbf{E}^{(l)}_{r_{2}}, \dots, \mathbf{E}^{(l)}_{r_{K}})$ is the ordered list of $K$ experts selected for $t$ by descending router weight, and $r_{k} \in \{1,\dots,M\}$ denotes the index of the $k$-th ranked expert.

\textbf{Step 1: Primary expert selection.} 
We identify the primary expert set in layer $l$ as the union of the Top-$S$ experts over all tokens in the current batch:
\begin{equation}
\mathcal{E}^{(l)}_{p} = \bigcup_{\mathcal{T}} \;\big\{ \mathbf{E}^{(l)}_{r_{k}} \;\big|\; 1 \le k \le S \big\}.
\label{eq:primary_set}
\end{equation}
Here, $S \in [1,K)$ is a hyperparameter controlling the size of the primary expert set. Smaller $S$ leads to fewer activated experts and higher acceleration, but may degrade quality. Experts in $\mathcal{E}^{(l)}_{p}$ are considered important and are always retained.

\textbf{Step 2: Similarity-based re-routing for secondary experts.}  
For each secondary expert $\mathbf{E}^{(l)}_{u} \in \left( \bigcup_{t \in \mathcal{T}} R^{(l)}(t) \right) \setminus \mathcal{E}^{(l)}_{p}$, we use the similarity matrix $\mathbf{S}^{(l)}$ to find its most similar primary expert:
\begin{equation}
    \mathrm{sim}^*_{u} = \max_{E^{(l)}_{v} \in \mathcal{E}^{(l)}_{p}} \mathbf{S}^{(l)}_{u,v},  \quad
v^*_{u} = \mathop{\arg\max}_{\mathbf{E}^{(l)}_{v} \in \mathcal{E}^{(l)}_{p}} \mathbf{S}^{(l)}_{u,v}. 
\label{eq:argmax_sim}
\end{equation}
If $\mathrm{sim}^*_{u} \ge \rho$, where $\rho \in [0,1]$ is a similarity threshold, we re-route all tokens originally assigned to $\mathbf{E}^{(l)}_{u}$ to the most similar primary expert $\mathbf{E}^{(l)}_{v^*_{u}}$. If $\mathrm{sim}^*_{u} < \rho$, $\mathbf{E}^{(l)}_{u}$ is determined as a critical expert and preserved to avoid unsafe substitutions. It should be noted that the re-routing process does not modify the router weights. The formulaic expression is as follows:
\begin{equation}
\forall\, t_j:\; \mathbf{E}^{(l)}_{u} \in R^{(l)}(t_j) \;\wedge\; \mathrm{sim}^*_{u} \ge \rho 
\;\Longrightarrow\; \mathbf{E}^{(l)}_{u} \gets \mathbf{E}^{(l)}_{v^*_{u}}.
\label{eq:rerouting_condition}
\end{equation}

\textbf{Step 3: Final execution.}  
After re-routing, the final active expert set in layer $l$ is:
\begin{equation}
\mathcal{E}^{(l)}_{\mathrm{final}} 
= \mathcal{E}^{(l)}_{p} \;\cup\; \{\mathbf{E}^{(l)}_{u} \mid \mathrm{sim}^*_{u} < \rho \},
\label{eq:final_set}
\end{equation}
which contains all primary experts and any preserved critical secondary experts. The MoE layer then utilizes this updated token-to-expert mapping to produce the output activations.

\subsection{High-performance kernel implementation}
\label{sec:high_performance}





We further develop a high-performance, hardware-friendly, and plug-and-play CUDA kernel for SERE. The implementation is model-agnostic, compatible with a wide range of MoE architectures, and can be integrated seamlessly into the vLLM framework \citep{kwon2023efficient} without requiring modifications to its core execution pipeline. The pseudocode is outlined in Algorithm~\ref{algo:cuda_sere} in Appendix.

In practice, this CUDA-accelerated SERE achieves substantial speedups in batch decoding while preserving model accuracy, making it readily deployable in both research and production environments. Besides, enabling SERE requires only a single additional line of code, ensuring effortless adoption in existing MoE inference pipelines.

\section{Experiments}

\subsection{Experiment Settings}
\label{sec:main_exp_setting}




\textbf{Models} We evaluate SERE on three representative MoE models: Qwen1.5‑MoE‑A2.7B‑Chat \citep{bai2023qwen}, DeepSeekV2‑Lite \citep{liu2024deepseek}, and Qwen3‑30B‑A3B \citep{yang2025qwen3}.


\textbf{Baselines}
We compare SERE against several SOTA methods, including HC-SMoE \citep{chen2025retrainingfree}, Top-K reduction \citep{yang2025faster}, and LYNX \citep{gupta2024lynx}. All baselines are implemented using official code or reproduced in strict accordance with the original papers to ensure a fair comparison.


\textbf{Benchmarks} 
For accuracy evaluation, we use reasoning tasks from OpenCompass~\citep{2023opencompass} across three domains: \texttt{\textbf{Exam}} (CMMLU~\citep{li2023cmmlu}, BoolQ~\citep{clark2019boolq}, BBH~\citep{suzgun2022challenging}), \texttt{\textbf{Math}} (Math~\citep{hendrycksmath2021}, GSM8K~\citep{cobbe2021gsm8k}, Math\_{401}~\citep{yuan2023large}), and \texttt{\textbf{Code}} (HumanEval~\citep{chen2021evaluating}, MBPP~\citep{austin2021program}). CoT mode is used for CMMLU and BoolQ. For acceleration evaluation, we measure \textit{Time per Output Token} (TPOT) under varying \textit{Queries per Second} (QPS) using vLLM~\citep{kwon2023efficient}, with each model deployed on a single GPU. Input/output lengths are fixed at $128/32$ tokens.



\textbf{Hyper-Parameters} 
We use the Frobenius norm as the similarity metric and FineWeb‑Edu~\citep{lozhkov2024fineweb-edu} (400 sequences$\times$128 tokens) as the calibration dataset.
For expert merging methods, pruning rates are chosen to match the TPOT of expert skipping methods for a fair comparison. 
All experiments are conducted on NVIDIA H20 GPUs.


For more detailed settings, please refer to Appendix \ref{sec: experiment_settings}.




\subsection{Accuracy Comparison}
\label{sec:accuracy_experiment}

We comprehensively evaluate SERE and competitive baselines on the aforementioned models and benchmarks, Table \ref{tab-accuracy-qwen1.5}, \ref{tab-accuracy-deepseek}, and \ref{tab-accuracy-qwen3} present both accuracy and per‑token decoding latency (TPOT).

\begin{table}[!h]
\centering
\small
\setlength{\tabcolsep}{3pt}
\resizebox{.96\textwidth}{!}{
\begin{tabular}{l|ccc|ccc|cc|c|c}
\toprule
\multirow{2}{*}{\textbf{Methods \textbackslash Tasks}}
  & \multicolumn{3}{c|}{\textbf{Exam}} 
  & \multicolumn{3}{c|}{\textbf{Math}} 
  & \multicolumn{2}{c|}{\textbf{Code}} 
  & \multirow{2}{*}{\makecell[c]{\textbf{Avg.} \\ (Acc. $\uparrow$) \vspace{-2.5mm} } } 
  & \multirow{2}{*}{\makecell[c]{\textbf{TPOT} \\ (ms. $\downarrow$) \vspace{-2.5mm} } } \\
\cmidrule(lr){2-4} \cmidrule(lr){5-7} \cmidrule(lr){8-9} 
& \textbf{cmmlu} & \textbf{boolq} & \textbf{bbh} & \textbf{math} & \textbf{gsm8k} & \textbf{math$_{401}$} & \textbf{heval} & \textbf{mbpp} & \\
\midrule
Qwen1.5-A2.7B \textcolor{gray}{$_{top4}$}           & 69.58 & 80.46 & 34.97 & 14.38 & 51.86 & 60.60 & 45.73 & 30.60 & 48.52 & 17.29 \\
\midrule
Qwen1.5-A2.7B \textcolor{gray}{$_{top2}$}          & 66.69 & 75.87 & 32.26 & 12.92 & 44.28 & 51.36 & 32.02 & 27.40 & 42.85 & \textbf{13.53} \\
HC-SMoE \textcolor{gray}{$_{40\ \text{experts}}$}                & 45.11 & 74.95 & 29.01 & 4.26  & 27.67 & 42.64 & 4.88  & 1.80  & 28.79 & 14.20 \\
LYNX \textcolor{gray}{$_{top2}$}                        & 42.57 & 78.62 & 23.59 & 9.56  & 29.57 & 34.41 & 8.54  & 7.40  & 29.28 & 14.49 \\
\rowcolor{gray!15}
SERE \textcolor{gray}{$_{top2;\ \rho=0.0}$}                  & 68.12 & \textbf{80.15} & 33.16 & 14.06 & 50.19 & \textbf{58.35} & \textbf{46.95} & 26.20 & 47.15 & 13.83 \\
\rowcolor{gray!15}
SERE \textcolor{gray}{$_{top2;\ \rho=0.3}$}                  & \textbf{68.49} & 79.97 & \textbf{34.61} & \textbf{14.66} & \textbf{51.63} & \textbf{58.35} & 42.68 & \textbf{27.60} & \textbf{47.25} & 13.93 \\
\midrule
Qwen1.5-A2.7B \textcolor{gray}{$_{top1}$}           & 45.12 & 48.35 & 29.47 & 5.24  & 26.16 & 46.13 & 15.85 & 14.80 & 28.89 & \textbf{11.47} \\
HC-SMoE \textcolor{gray}{$_{30\ \text{experts}}$}                 & 7.93  & 34.19 & 29.14 & 1.72  & 8.95  & 29.18 & 0.61  & 0.00  & 13.97 & 13.30 \\
LYNX  \textcolor{gray}{$_{top1}$}                       & 16.60 & 77.68 & 15.10 & 0.68  & 2.12  & 10.22 & 0.00  & 0.20  & 15.33 & 12.95 \\
\rowcolor{gray!15}
SERE \textcolor{gray}{$_{top1;\ \rho=0.0}$}                  & 60.09 & \textbf{79.85} & 32.71 & 7.58  & 33.36 & 52.12 & \textbf{17.07} & 20.20 & 37.87 & 12.13 \\
\rowcolor{gray!15}
SERE \textcolor{gray}{$_{top1;\ \rho=0.3}$}                  & \textbf{65.83} & 78.69 & \textbf{33.62} & \textbf{9.74} & \textbf{39.88} & \textbf{53.12} & \textbf{17.07} & \textbf{20.60} & \textbf{39.82} & 12.95 \\
\bottomrule
\end{tabular}
}
\caption{OpenCompass and TPOT (QPS=16) results on Qwen1.5-MoE-A2.7B. \textbf{Bold} for the best.}
\vspace{-3mm}
\label{tab-accuracy-qwen1.5}
\end{table}

\begin{table}[!h]
\centering
\small
\setlength{\tabcolsep}{3pt}
\resizebox{.96\textwidth}{!}{
\begin{tabular}{l|ccc|ccc|cc|c|c}
\toprule
\multirow{2}{*}{\textbf{Methods \textbackslash Tasks}}
  & \multicolumn{3}{c|}{\textbf{Exam}} 
  & \multicolumn{3}{c|}{\textbf{Math}} 
  & \multicolumn{2}{c|}{\textbf{Code}} 
  & \multirow{2}{*}{\makecell[c]{\textbf{Avg.} \\ (Acc. $\uparrow$) \vspace{-2.5mm} } } 
  & \multirow{2}{*}{\makecell[c]{\textbf{TPOT} \\ (ms. $\downarrow$) \vspace{-2.5mm} } } \\
\cmidrule(lr){2-4} \cmidrule(lr){5-7} \cmidrule(lr){8-9} 
& \textbf{cmmlu} & \textbf{boolq} & \textbf{bbh} & \textbf{math} & \textbf{gsm8k} & \textbf{math$_{401}$} & \textbf{heval} & \textbf{mbpp} & \\
\midrule
DeepSeekV2-Lite \textcolor{gray}{$_{top6}$}               & 53.34 & 82.39 & 49.37 & 23.82 & 59.14 & 70.32 & 54.27 & 45.40 & 54.76 & 26.35 \\
\midrule
DeepSeekV2-Lite \textcolor{gray}{$_{top2}$}               & 36.91 & 73.67 & 42.51 & 15.90 & 52.39 & 65.84 & 40.85 & 34.80 & 45.36 & \textbf{19.51} \\
HC-SMoE \textcolor{gray}{$_{48\ \text{experts}}$}                   & 39.74 & 80.70 & 41.97 & 9.16  & 47.92 & 45.14 & 10.98 & 7.00  & 35.33 & 22.36 \\
LYNX \textcolor{gray}{$_{top2}$}                               & 16.32 & 68.62 & 19.68 & 9.06  & 31.92 & 33.67 & 10.37 & 2.40  & 24.01 & 22.07 \\
\rowcolor{gray!15}
SERE \textcolor{gray}{$_{top2;\ \rho=0.0}$}                    & \textbf{53.13} & \textbf{82.11} & 48.67 & 23.04 & \textbf{61.03} & \textbf{71.07} & 56.10 & 45.80 & 55.12 & 21.60 \\
\rowcolor{gray!15}
SERE \textcolor{gray}{$_{top2;\ \rho=0.3}$}                    & 53.04 & 82.02 & \textbf{49.11} & \textbf{23.80} & 60.50 & 69.83 & \textbf{58.54} & \textbf{47.00} & \textbf{55.48} & 23.12 \\
\midrule
DeepSeekV2-Lite \textcolor{gray}{$_{top1}$}              & 19.41 & 58.90 & 33.81 & 2.56  & 17.82 & 48.88 & 7.93  & 7.60  & 24.61 & \textbf{18.02} \\
HC-SMoE \textcolor{gray}{$_{32\ \text{experts}}$}                   & 26.51 & 63.06 & 33.48 & 0.94  & 6.29  & 13.97 & 0.00  & 0.80  & 18.13 & 20.28 \\
LYNX \textcolor{gray}{$_{top1}$}                               & 2.16  & 49.91 & 3.96  & 0.14  & 1.29  & 2.00  & 0.00  & 0.00  & 7.43 & 20.00 \\
\rowcolor{gray!15}
SERE \textcolor{gray}{$_{top1;\ \rho=0.0}$}                    & \textbf{53.81} & 82.11 & 48.69 & \textbf{23.74} & 58.53 & \textbf{72.32} & 57.93 & 45.40 & 55.32 & 18.54 \\
\rowcolor{gray!15}
SERE \textcolor{gray}{$_{top1;\ \rho=0.3}$}                    & 53.49 & \textbf{82.63} & \textbf{48.90} & 22.94 & \textbf{59.36} & 71.32 & \textbf{59.15} & \textbf{47.20} & \textbf{55.62} & 20.59 \\
\bottomrule
\end{tabular}
}
\caption{OpenCompass and TPOT (QPS=16) results on DeepSeekV2-Lite. \textbf{Bold} for the best.}
\label{tab-accuracy-deepseek}
\end{table}

\begin{table}[!h]
\centering
\small
\setlength{\tabcolsep}{3pt}
\resizebox{.96\textwidth}{!}{
\begin{tabular}{l|ccc|ccc|cc|c|c}
\toprule
\multirow{2}{*}{\textbf{Methods \textbackslash Tasks}}
  & \multicolumn{3}{c|}{\textbf{Exam}} 
  & \multicolumn{3}{c|}{\textbf{Math}} 
  & \multicolumn{2}{c|}{\textbf{Code}} 
  & \multirow{2}{*}{\makecell[c]{\textbf{Avg.} \\ (Acc. $\uparrow$) \vspace{-2.5mm} } } 
  & \multirow{2}{*}{\makecell[c]{\textbf{TPOT} \\ (ms. $\downarrow$) \vspace{-2.5mm} } } \\
\cmidrule(lr){2-4} \cmidrule(lr){5-7} \cmidrule(lr){8-9} 
& \textbf{cmmlu} & \textbf{boolq} & \textbf{bbh} & \textbf{math} & \textbf{gsm8k} & \textbf{math$_{401}$} & \textbf{heval} & \textbf{mbpp} &  \\
\midrule
Qwen3-30B-A3B \textcolor{gray}{$_{top8}$}           & 84.88 & 90.21 & 76.70 & 72.28 & 89.23 & 79.05 & 87.20 & 78.40 & 82.24 & 44.40 \\
\midrule
Qwen3-30B-A3B \textcolor{gray}{$_{top2}$}          & 10.01 & 60.52 & 10.48 & 3.38  & 6.97  & 16.96 & 3.66  & 2.40  & 14.30 & \textbf{30.97} \\
HC-SMoE \textcolor{gray}{$_{80\ \text{experts}}$}  & 45.62 & 83.94 & 65.11 & 59.86 & 79.23 & 64.84 & \textbf{86.59} & 70.20 & 69.42 & 39.14 \\
LYNX  \textcolor{gray}{$_{top2}$}      & 81.36 & 90.12 & 72.27 & 69.10 & 80.44 & 76.81 & 84.15 & \textbf{73.40} & 78.46 & 38.21 \\
\rowcolor{gray!15}
SERE \textcolor{gray}{$_{top2;\ \rho=0.0}$}   & 81.24 & 89.79 & 71.33 & 70.22 & 82.41 & 80.80 & 82.93 & 63.80 & 77.82 & {32.12} \\
\rowcolor{gray!15}
SERE \textcolor{gray}{$_{top2;\ \rho=0.5}$}   & \textbf{81.51} & \textbf{90.37} & \textbf{74.15} & \textbf{72.06} & \textbf{85.97} & \textbf{81.55} & 85.37 & 72.00 & \textbf{80.37} & 32.82 \\
\midrule
Qwen3-30B-A3B \textcolor{gray}{$_{top1}$}           & 0.00  & 61.68 & 4.89  & 0.08  & 0.91  & 1.25  & 0.00  & 0.00  & 8.60 & \textbf{27.28}  \\
HC-SMoE \textcolor{gray}{$_{48\ \text{experts}}$}  & 32.78 & 64.53 & 51.66 & 34.36 & 40.79 & \textbf{54.86} & 49.39 & 44.20 & 46.57 & 33.45 \\
LYNX  \textcolor{gray}{$_{top1}$}          & 70.76 & 88.26 & 59.08 & 44.28 & 48.37 & 47.88 & 55.49 & 46.00 & 57.52 & 33.38 \\
\rowcolor{gray!15}
SERE \textcolor{gray}{$_{top1;\ \rho=0.0}$}   & 60.53 & 85.08 & 57.64 & 46.98 & 52.08 & 52.12 & 32.32 & 31.40 & 52.27 & {28.04} \\
\rowcolor{gray!15}
SERE \textcolor{gray}{$_{top1;\ \rho=0.5}$}   & \textbf{77.89} & \textbf{89.76} & \textbf{65.45} & \textbf{53.40} & \textbf{54.28} & \textbf{54.86} & \textbf{64.02} & \textbf{53.20} & \textbf{64.11} & 33.10 \\
\bottomrule
\end{tabular}
}
\caption{OpenCompass and TPOT (QPS=16) results on Qwen3-30B-A3B. \textbf{Bold} for the best.}
\vspace{-4mm}
\label{tab-accuracy-qwen3}
\end{table}

SERE consistently achieves the best trade-off between accuracy and inference efficiency. With aggressive expert skipping (e.g., Top-$2$), SERE maintains over $97\%$ of the original model’s accuracy across all tasks, while reducing decoding latency by up to $1.6\times$ on Qwen3 and $1.4\times$ on Qwen1.5 and DeepSeekV2. In contrast, direct Top-$K$ reduction yields the lowest latency but causes severe performance degradation (up to $90\%$ accuracy drop), indicating a significant loss of model capacity.


HC‑SMoE and LYNX achieve competitive performance on Qwen3 but show significant accuracy drops on Qwen1.5 and DeepSeekV2, particularly for math and code tasks. 
This may stem from architectural differences: Qwen3 contains more fine‑grained and redundant experts, allowing greater tolerance to merging or skipping, whereas Qwen1.5 and DeepSeekV2 have fewer, more specialized experts and are thus more sensitive to expert selection.
Methodologically, HC‑SMoE’s static merging reduces expert diversity, while LYNX ignores expert characteristics, thereby both impairing reasoning capability. In contrast, SERE incorporates both inter‑expert similarity and the preservation of critical experts into its dynamic skipping strategy, removing redundancy while safeguarding essential capacity, thereby delivering consistently superior performance across all models and tasks.


Furthermore, we can observe that SERE performs well even without preserving critical experts ($\rho=0$), while preservation ($\rho>0$) brings further accuracy gains with negligible latency. The similarity threshold provides fine‑grained control over the trade‑off between capability and speed.

\subsection{Acceleration Comparison}

\begin{figure*}[!h]
    \centering
    \includegraphics[width=\linewidth]{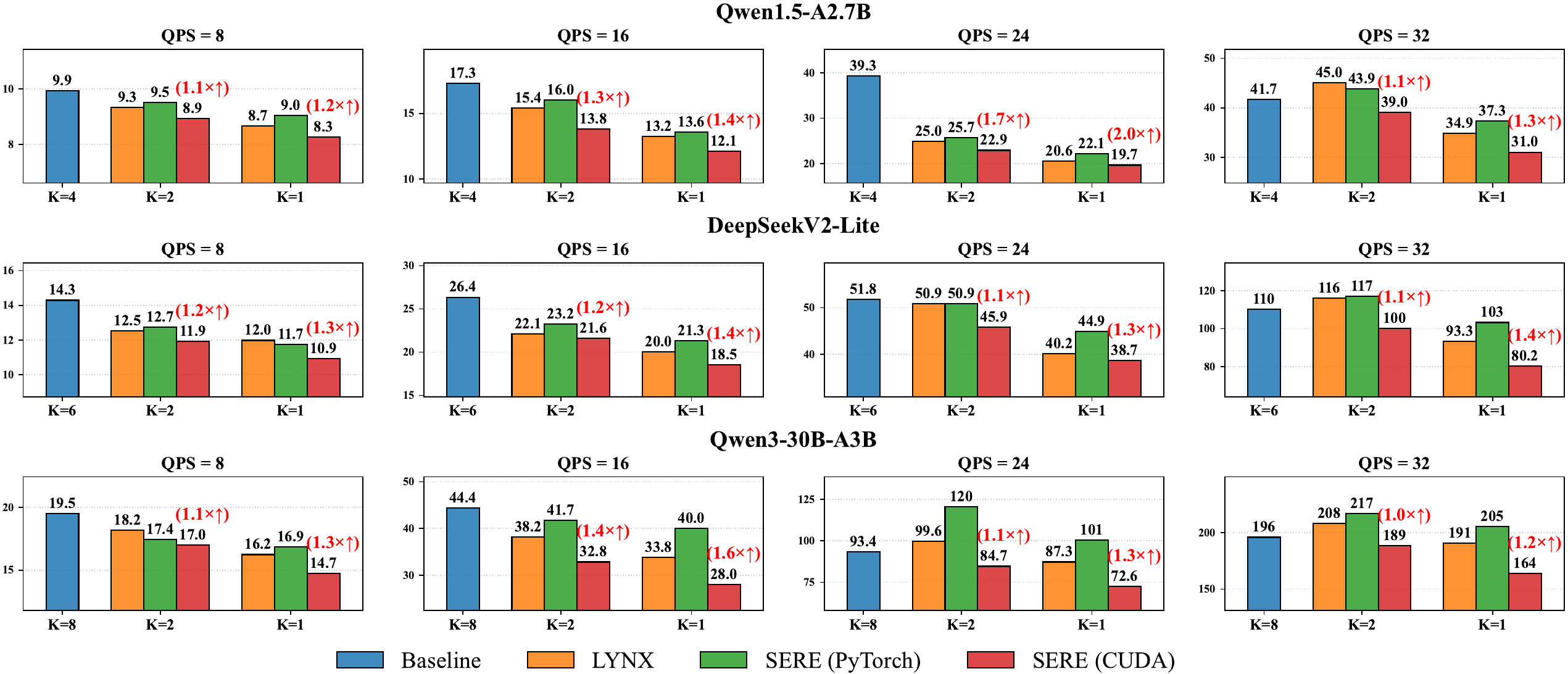}
    \caption{Batched Inference Latency between different methods in different QPS and Top-K.
    }
    \label{fig:experiment-speed}
\end{figure*}



In this section, we compare the acceleration performance of different methods across multiple models and QPS settings. As shown in Figure~\ref{fig:experiment-speed}, SERE consistently achieves substantial reductions in decoding latency under all evaluated QPS conditions. For Qwen3 and DeepSeekV2, SERE yields a $\textbf{1.2}\times$ to $\textbf{1.6}\times$ speedup, while for Qwen1.5, the acceleration ratio reaches up to $\textbf{2.0}\times$ when QPS$=24$, with almost no performance loss (See Section \ref{sec:accuracy_experiment}). Moreover, the CUDA-implemented SERE delivers approximately $\textbf{1.5}\times$ speedup over the PyTorch version. Besides, as shown in Figure~\ref{fig:computation_cost_breakdown}, the additional re-routing overhead is negligible relative to expert computation and remains stable across batch sizes. These results confirm the efficiency of the custom CUDA kernel.



We further analyze the variation in the average activated expert count under different Top‑$K$ settings. As shown in Fig.~\ref{fig:topk_activation_vs_batch_size}, the count grows logarithmically with batch size for all Top‑$K$ values, and larger $K$ consistently leads to more activations. The results indicate that the primary activated experts for different tokens are highly concentrated, which explains why SERE achieves significant acceleration. Furthermore, Fig.~\ref{fig:topk_activation_vs_layer_batch32} shows that the inter‑layer differences in activated expert count become more pronounced as $K$ increases, highlighting the importance of dynamic expert skipping, where more aggressive skipping is applied to layers with higher activations.

\rebuttal{We also examine how SERE behaves in the prefill stage, with details presented in Appendix~\ref{sec:prefill_analysis}.}

\begin{figure*}[!h]
    \centering
    \begin{subfigure}[!t]{0.32\textwidth}
        \centering
        \includegraphics[width=\textwidth]{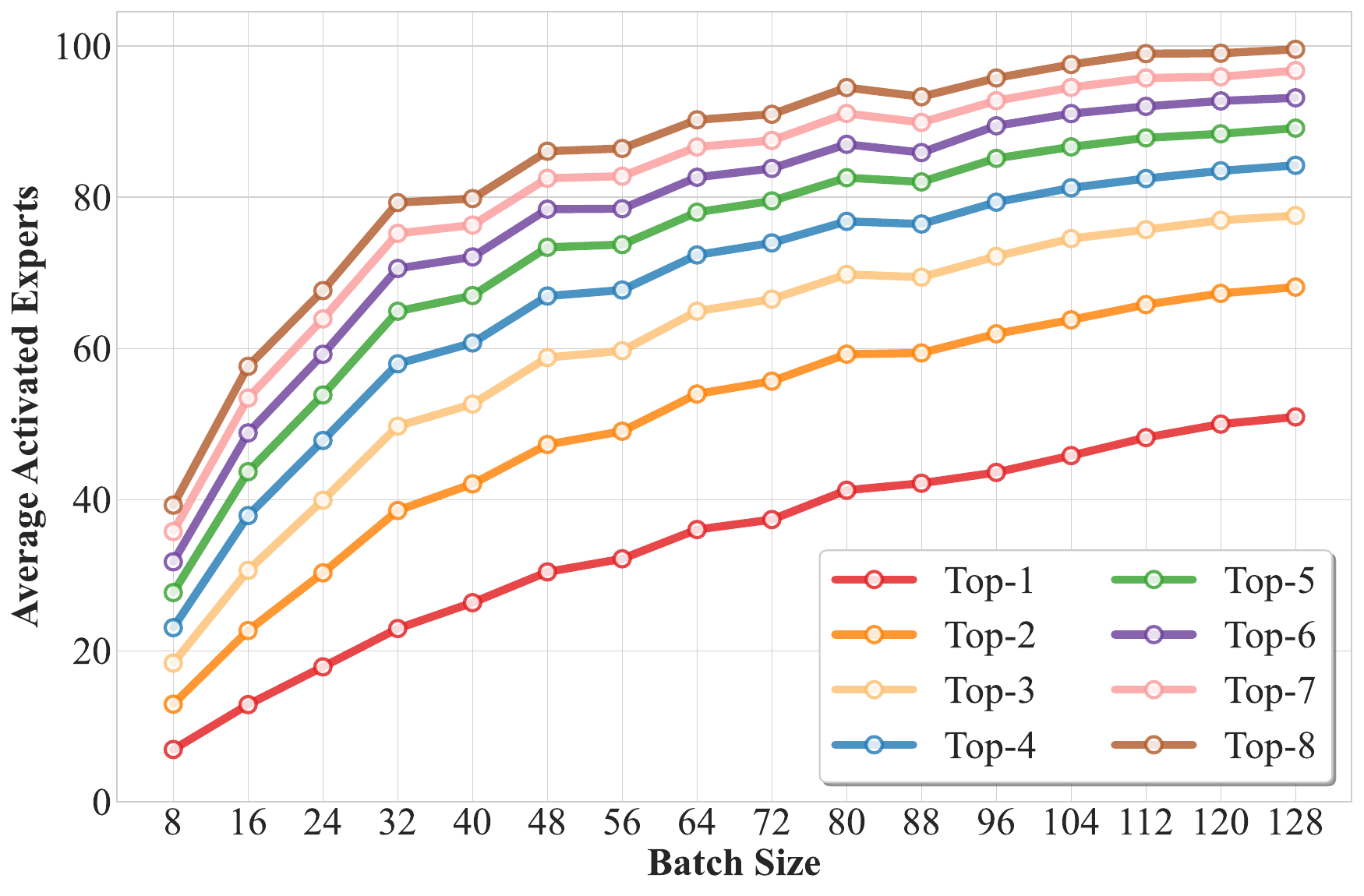}
        \caption{Activated No. vs. Batch Size.}
        \label{fig:topk_activation_vs_batch_size}
    \end{subfigure}
    \hfill
    \begin{subfigure}[!t]{0.32\textwidth}
        \centering
        \includegraphics[width=\textwidth]{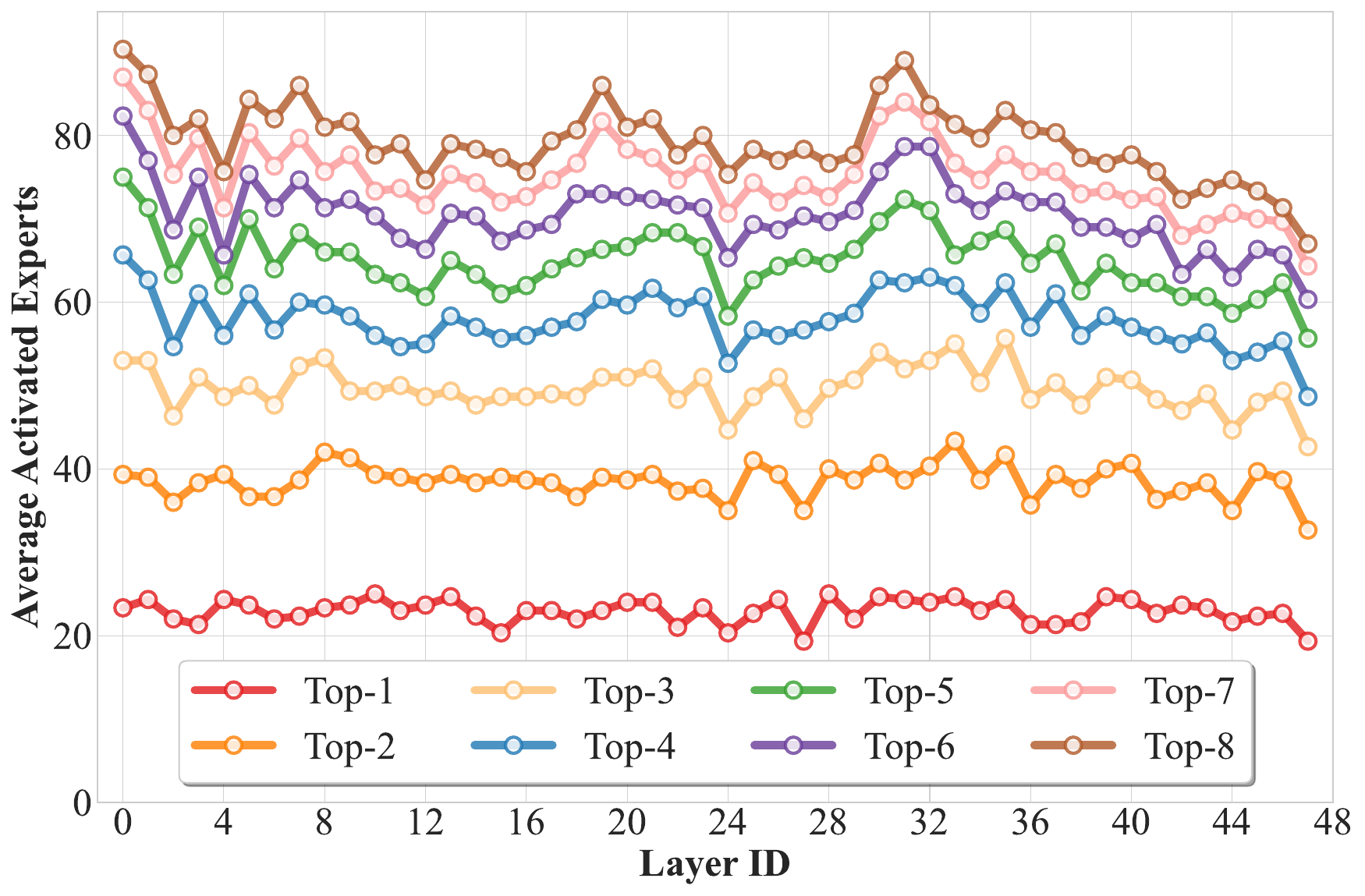}
        \caption{Activated No. vs. Layer No.}
        \label{fig:topk_activation_vs_layer_batch32}
    \end{subfigure}
    \hfill
    \begin{subfigure}[!t]{0.30\textwidth}
        \centering
        \scriptsize
        \setlength{\tabcolsep}{3pt}
        \renewcommand{\arraystretch}{1.1}
        \resizebox{\textwidth}{!}{
            \begin{tabular}{c|ccc}
                \toprule
                \rowcolor{gray!25}
                Batch & Attn & SERE & MLP \\
                \midrule
                16    & 115  & 6       & 137 \\
                \rowcolor{gray!10}
                24    & 117  & 6       & 186 \\
                32    & 119  & 6       & 227 \\
                \rowcolor{gray!10}
                64    & 119  & 6       & 233 \\
                \bottomrule
            \end{tabular}
        }
        \vspace{3mm}
        \caption{Computation cost breakdown.}
        \label{fig:computation_cost_breakdown}
    \end{subfigure}
    \caption{
        (a)\&(b) Average activated expert count of Qwen3‑30B‑A3B under different Top‑$K$: variation with batch size and across layers (batch size=32).
        (c) Computational cost breakdown ($\mu$s) of key MoE operations for Qwen3‑30B‑A3B at varying batch sizes.
    }
    \label{fig:analysis-during-inference}
\end{figure*}

\subsection{Ablation Study}
\label{sec:ablation_study}



\textbf{Ablation on Similarity Threshold}
We conduct experiments under both Top‑1 and Top‑2 settings, varying the threshold from $0.0$ to $1.0$, where $\rho=1.0$ corresponds to the original model without any expert skipping. The resulting speedup and average accuracy for the three models are shown in Figure \ref{fig:ablation_threshold}. Up to a point, increasing the threshold improves accuracy while adding only negligible decoding latency. Beyond that point, accuracy continues to rise, but the speedup drops sharply, indicating that too many active experts are being retained. In practice, the threshold at this inflection point offers a good balance between accuracy and latency. For example, in the case of Qwen3‑30B-A3B, a threshold of $0.5$ achieves this balance. We also notice that DeepSeekV2 maintains relatively stable performance across different settings, whereas Qwen3 and Qwen1.5 exhibit notable performance fluctuations. This finding highlights the substantial architectural and functional differences among different MoE models. \rebuttal{More analysis on threshold can be found in Appendix \ref{sec:Threshold_Analysis}.}

\begin{figure*}[!h]
    \centering
    \includegraphics[width=\textwidth]{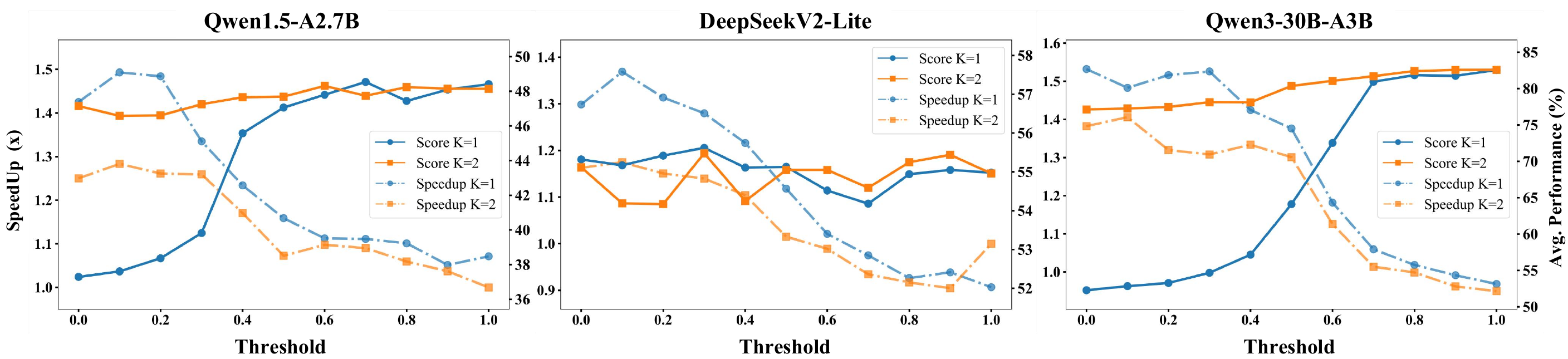}
    \caption{Speedup and Performance (Acc) under different similarity thresholds across models.}
    \label{fig:ablation_threshold}
\end{figure*}


\rebuttal{
\textbf{Ablation on Similarity Matrix Computation}
We investigate how different similarity metrics, parameter‑based similarity measures, calibration datasets, and calibration data volumes used to compute the expert similarity matrix can potentially affect the overall performance and stability of SERE across downstream tasks. For similarity metrics, we compare Frobenius similarity, cosine similarity, and CKA-based similarity~\citep{kornblith2019similarity}. For the data-free, parameter-based similarity computation methods, we follow~\cite{zhang2025diversifying} and adopt two strategies for combining expert parameters: (1) \textbf{Concat} method that directly concatenates the three weight matrices $\{\mathbf{\theta_1}, \mathbf{\theta_2}, \mathbf{\theta_3\}}$, and (2) \textbf{Logic} method that constructs a composite weight as $\mathbf{\theta_3} (\mathbf{\theta_1} \cdot \mathbf{\theta_2})$. For calibration datasets, we use general datasets including FineWeb-Edu~\citep{lozhkov2024fineweb-edu}, C4~\citep{JMLR:v21:20-074}, and WIKI~\citep{merity2016pointer}, together with domain-specific datasets derived from specific domains (Exam, Math, Code) and the mixed domains (OpenCompass). For calibration data volume, we experiment with three configurations: $200 \times 64$, $400 \times 128$, and $800 \times 256$.  Additional experimental details can be found in Appendix~\ref{sec:appendix_param_sim} and Appendix~\ref{sec:calibration_dataset}.
}

\rebuttal{
Table~\ref{tab:ablation_calibra_metric} shows that SERE is highly robust to different similarity metrics, and Frobenius provides the fastest calibration. By comparing Table~\ref{tab:ablation_calibra_metric} with Table~\ref{tab:param_sim_baselines}, we observe that the parameter-based similarity computation methods perform significantly worse than the activation-based methods. This suggests that capturing functional similarity through dynamic activations is more effective than computing similarity based on static expert parameters.
}

\rebuttal{
Table~\ref{tab:fused_final} reports the performance of SERE under different calibration datasets and calibration data volumes. When $K=2$, the performance remains highly consistent across different calibration datasets and data volumes, indicating that SERE is robust with respect to both the type and the scale of calibration data. We also find that even when calibrated with domain-specific data, SERE maintains strong performance on other domains, which suggests that the similarity matrix captures transferable expert relationships rather than overfitting to a particular domain. Furthermore, when $K=1$, domain-specific calibration provides slightly better results than general calibration, indicating that in high skipping rate settings, using domain-specific calibration data can further improve the performance of SERE.
}

Balancing calibration efficiency, effectiveness, and universality, we choose Frobenius Similarity and the FineWeb-Edu calibration dataset as our final implementation.

\begin{table}[!h]
    \centering
    \small
    \setlength{\tabcolsep}{2.5pt}
    \resizebox{.98\textwidth}{!}{
    \begin{tabular}{l|cccc|cccc|c}
        \toprule
        \rowcolor{gray!25}
        & \multicolumn{4}{c|}{K=1} & \multicolumn{4}{c|}{K=2} & Time \\
        \rowcolor{gray!25}
        \multirow{-2}{*}{Method} & Exam   & Math   & Code   & AVG & Exam   & Math   & Code   & AVG & Cost (s.) \\
        \midrule
        Frobenius   & 57.55 & 31.02 & 18.64 & 37.87 & 60.48 & 40.87 & 36.58 & 47.15 & 28 \\
        \rowcolor{gray!10}
        Cosine      & 57.90 & 26.57 & 17.16 & 35.96 & 60.49 & 38.57 & 33.62 & 45.55 & 75 \\
        CKA-RBF     & 58.39 & 31.29 & 19.98 & 38.62 & 60.94 & 40.84 & 34.74 & 46.85 & 16064 \\
        \rowcolor{gray!10}
        CKA-Poly    & 57.50 & 29.59 & 20.26 & 37.72 & 60.63 & 40.68 & 32.59 & 46.13 & 13459 \\
        CKA-Linear  & 58.12 & 29.77 & 19.46 & 37.83 & 60.57 & 40.31 & 35.18 & 46.62 & 541 \\
        \rowcolor{gray!10}
        \textbf{Mean}{\textit{\textbf{$\pm$Std}}} & 
        {\textbf{57.89}}\textsubscript{\textit{\textbf{±0.36}}} &
        {\textbf{29.65}}\textsubscript{\textit{\textbf{±1.84}}} &
        {\textbf{19.10}}\textsubscript{\textit{\textbf{±1.20}}} &
        {\textbf{37.60}}\textsubscript{\textit{\textbf{±0.94}}} &
        {\textbf{60.62}}\textsubscript{\textit{\textbf{±0.18}}} &
        {\textbf{40.25}}\textsubscript{\textit{\textbf{±0.93}}} &
        {\textbf{34.54}}\textsubscript{\textit{\textbf{±1.54}}} &
        {\textbf{46.46}}\textsubscript{\textit{\textbf{±0.60}}} & / \\
        \bottomrule
    \end{tabular}
    }
    \caption{Comparisons across different similarity metrics on Qwen1.5-MoE-A2.7B.}
    \label{tab:ablation_calibra_metric}
\end{table}

\begin{table}[!h]
\centering
\small
\setlength{\tabcolsep}{2.5pt}
\resizebox{.98\textwidth}{!}{
\begin{tabular}{l|l|cccc|cccc}
\toprule
\rowcolor{gray!25}
 & & \multicolumn{4}{c|}{K=1} & \multicolumn{4}{c}{K=2} \\
\rowcolor{gray!25}
\multirow{-2}{*}{Combine} & \multirow{-2}{*}{Metric} & Exam & Math & Code & AVG & Exam & Math & Code & AVG \\
\midrule

Concat & Frob & 58.41 & 24.09 & 19.54 & 34.01 & 60.69 & 39.01 & 35.72 & 45.14 \\
\rowcolor{gray!10}
Concat & Cosine & 58.74 & 25.38 & 13.70 & 32.61 & 60.76 & 39.88 & 34.32 & 44.99 \\
Concat & CKA-L & 58.24 & 30.20 & 18.54 & 35.66 & 60.76 & 40.06 & 33.82 & 44.88 \\
\rowcolor{gray!10}
Concat & CKA-R & 58.34 & 30.53 & 19.77 & 36.21 & 60.67 & 39.55 & 32.60 & 44.27 \\
Concat & CKA-P & 58.73 & 30.52 & 20.26 & 36.50 & 60.80 & 39.60 & 30.37 & 43.59 \\
\rowcolor{gray!10}
\textbf{Mean}{\textit{\textbf{$\pm$Std}}} & ~~~~ / 
& {\textbf{58.49}}\textsubscript{\textit{\textbf{±0.21}}}
& {\textbf{28.14}}\textsubscript{\textit{\textbf{±2.82}}}
& {\textbf{18.36}}\textsubscript{\textit{\textbf{±2.40}}}
& {\textbf{35.00}}\textsubscript{\textit{\textbf{±1.47}}}
& {\textbf{60.74}}\textsubscript{\textit{\textbf{±0.05}}}
& {\textbf{39.62}}\textsubscript{\textit{\textbf{±0.36}}}
& {\textbf{33.37}}\textsubscript{\textit{\textbf{±1.80}}}
& {\textbf{44.57}}\textsubscript{\textit{\textbf{±0.57}}} \\

\midrule

Logic & Frob & 58.94 & 23.52 & 17.03 & 33.16 & 61.00 & 38.36 & 33.41 & 44.26 \\
\rowcolor{gray!10}
Logic & Cosine & 58.89 & 29.19 & 17.74 & 35.27 & 60.61 & 40.43 & 31.69 & 44.24 \\
Logic & CKA-L & 58.02 & 28.91 & 18.55 & 35.16 & 60.72 & 39.52 & 31.77 & 44.00 \\
\rowcolor{gray!10}
Logic & CKA-R & 57.76 & 28.55 & 17.13 & 34.48 & 60.57 & 38.24 & 35.34 & 44.72 \\
Logic & CKA-P & 58.42 & 28.49 & 16.92 & 34.61 & 60.89 & 40.29 & 32.60 & 44.59 \\
\rowcolor{gray!10}
\textbf{Mean}{\textit{\textbf{$\pm$Std}}} & ~~~~ /
& {\textbf{58.41}}\textsubscript{\textit{\textbf{±0.47}}}
& {\textbf{27.73}}\textsubscript{\textit{\textbf{±2.12}}}
& {\textbf{17.47}}\textsubscript{\textit{\textbf{±0.61}}}
& {\textbf{34.54}}\textsubscript{\textit{\textbf{±0.75}}}
& {\textbf{60.76}}\textsubscript{\textit{\textbf{±0.16}}}
& {\textbf{39.37}}\textsubscript{\textit{\textbf{±0.93}}}
& {\textbf{32.96}}\textsubscript{\textit{\textbf{±1.34}}}
& {\textbf{44.36}}\textsubscript{\textit{\textbf{±0.26}}} \\

\bottomrule
\end{tabular}
}
\caption{\rebuttal{Comparisons across different data-free similarity measures on Qwen1.5-MoE-A2.7B.}}
\label{tab:param_sim_baselines}
\end{table}

\begin{table}[!h]
\centering
\small
\setlength{\tabcolsep}{2.5pt}
\resizebox{.98\textwidth}{!}{
\begin{tabular}{l|c|cccc|cccc}
\toprule
\rowcolor{gray!25}
Calibration & & \multicolumn{4}{c|}{K=1} & \multicolumn{4}{c}{K=2} \\
\rowcolor{gray!25}
Dataset & \multirow{-2}{*}{Volume} & Exam & Math & Code & AVG & Exam & Math & Code & AVG \\
\midrule

Fineweb & 400$\times$128 & 57.55 & 31.02 & 18.64 & 37.87 & 60.48 & 40.87 & 36.58 & 47.15 \\
\rowcolor{gray!10}
C4      & 400$\times$128 & 57.42 & 30.82 & 17.54 & 37.85 & 60.87 & 41.21 & 35.24 & 47.09 \\
WIKI    & 400$\times$128 & 57.64 & 30.93 & 18.34 & 37.90 & 60.70 & 40.92 & 35.65 & 47.02 \\
\rowcolor{gray!10}
\textbf{Mean}{\textit{\textbf{$\pm$Std}}}
& / 
& {\textbf{57.54}}\textsubscript{\textit{\textbf{±0.11}}} 
& {\textbf{30.92}}\textsubscript{\textit{\textbf{±0.10}}} 
& {\textbf{18.17}}\textsubscript{\textit{\textbf{±0.57}}} 
& {\textbf{37.87}}\textsubscript{\textit{\textbf{±0.03}}} 
& {\textbf{60.68}}\textsubscript{\textit{\textbf{±0.20}}} 
& {\textbf{41.00}}\textsubscript{\textit{\textbf{±0.18}}} 
& {\textbf{35.82}}\textsubscript{\textit{\textbf{±0.69}}} 
& {\textbf{47.09}}\textsubscript{\textit{\textbf{±0.07}}}  \\

\midrule

Exam        & 400$\times$128 & 58.20 & 33.34 & 22.88 & 38.14 & 60.80 & 40.26 & 35.34 & 45.47 \\
\rowcolor{gray!10}
Math        & 400$\times$128 & 58.15 & 32.48 & 23.50 & 38.04 & 60.88 & 40.25 & 36.43 & 45.85 \\
Code        & 400$\times$128 & 58.58 & 33.06 & 23.70 & 38.45 & 60.71 & 40.63 & 36.75 & 46.03 \\
\rowcolor{gray!10}
OpenCompass & 400$\times$128 & 57.67 & 32.07 & 24.60 & 38.11 & 61.16 & 41.65 & 37.78 & 46.86 \\
\textbf{Mean}{\textit{\textbf{$\pm$Std}}}
& / 
& {\textbf{58.15}}\textsubscript{\textit{\textbf{±0.38}}} 
& {\textbf{32.74}}\textsubscript{\textit{\textbf{±0.50}}} 
& {\textbf{23.67}}\textsubscript{\textit{\textbf{±0.63}}} 
& {\textbf{38.19}}\textsubscript{\textit{\textbf{±0.15}}} 
& {\textbf{60.89}}\textsubscript{\textit{\textbf{±0.18}}} 
& {\textbf{40.70}}\textsubscript{\textit{\textbf{±0.61}}} 
& {\textbf{36.58}}\textsubscript{\textit{\textbf{±1.03}}} 
& {\textbf{46.05}}\textsubscript{\textit{\textbf{±0.61}}}  \\

\midrule

\rowcolor{gray!10}
Fineweb & 200$\times$64  & 57.54 & 30.93 & 17.54 & 37.56 & 60.82 & 40.20 & 35.12 & 46.66 \\
Fineweb & 400$\times$128 & 57.55 & 31.02 & 18.64 & 37.87 & 60.48 & 40.87 & 36.58 & 47.15 \\
\rowcolor{gray!10}
Fineweb & 800$\times$256 & 57.95 & 31.88 & 17.53 & 38.06 & 60.44 & 40.87 & 34.34 & 46.58 \\
\textbf{Mean}{\textit{\textbf{$\pm$Std}}}
& / 
& {\textbf{57.68}}\textsubscript{\textit{\textbf{±0.23}}} 
& {\textbf{31.28}}\textsubscript{\textit{\textbf{±0.53}}} 
& {\textbf{17.90}}\textsubscript{\textit{\textbf{±0.64}}} 
& {\textbf{37.83}}\textsubscript{\textit{\textbf{±0.25}}} 
& {\textbf{60.58}}\textsubscript{\textit{\textbf{±0.22}}} 
& {\textbf{40.65}}\textsubscript{\textit{\textbf{±0.38}}} 
& {\textbf{35.35}}\textsubscript{\textit{\textbf{±1.12}}} 
& {\textbf{46.80}}\textsubscript{\textit{\textbf{±0.31}}}  \\

\bottomrule
\end{tabular}
}
\caption{\rebuttal{Comparisons across different calibration datasets on Qwen1.5-MoE-A2.7B.}}
\label{tab:fused_final}
\end{table}

\textbf{Ablation on Re-Routing Methods}
We further evaluate model performance under three re‑routing strategies: to the most similar expert, to a random expert, and to the least similar expert. As shown in Table \ref{tab:ablation_similarity_matrix}, re‑routing to the most similar expert consistently outperforms random expert selection, whereas choosing the least similar expert severely degrades performance. 
\rebuttal{We also provide a theoretical analysis showing that similarity-based re-routing method yields a tighter upper bound on output perturbation (Appendix~\ref{sec:theoretical_analysis}).}
These results demonstrate the critical role of the similarity matrix in guiding effective expert selection.

\begin{table}[!h]
    \centering
    \small
    \renewcommand{\arraystretch}{0.9}
    \begin{tabular}{l|cccc|cccc}
    \toprule
    \rowcolor{gray!25}
      & \multicolumn{4}{c|}{K=1} & \multicolumn{4}{c}{K=2} \\
    \rowcolor{gray!25}
    \multirow{-2}{*}{Method} & Exam & Math & Code & AVG & Exam & Math & Code & AVG \\
    \midrule
    Most Sim  & 57.55 & 31.02 & 18.64 & 37.87 & 60.66 & 40.87 & 36.58 & 47.15 \\
    \rowcolor{gray!10}
    Random    & 45.18 & 21.41 & 11.09 & 27.74  & 57.12 & 34.91 & 28.66 & 41.68  \\
    Dis Sim & 11.03 & 1.55 & 0.00 & 4.72  & 38.77 & 28.05 & 9.65 & 28.74  \\
    \bottomrule
    \end{tabular}
    \caption{Comparisons across different re-routing methods on Qwen1.5-MoE-A2.7B.}
    \vspace{-3mm}
    \label{tab:ablation_similarity_matrix}
\end{table}

\section{Conclusion}


In this work, we investigate the challenges faced by MoE models during batched inference. We analyze the expert similarity patterns and activation weight distributions in MoE models. Building on the insights, we propose SERE, a novel method for accelerating batched decoding in MoE models. SERE dynamically re‑routes tokens assigned to secondary experts toward their most similar primary experts, thereby reducing the number of active experts, while preserving critical experts to safeguard model capability. We further develop a customized, efficient CUDA kernel for SERE. Extensive experiments demonstrate that SERE achieves up to $2 \times$ speedup with only a slight impact on model quality. Our study provides new insights into MoE inference optimization, highlighting re-routing as a promising direction beyond traditional approaches such as pruning or quantization, and sets the stage for future work on dynamic expert selection and efficient MoE deployment.

\clearpage


\section*{Acknowledgement}

This work was supported in part by the Natural Science Foundation of China (No. 62332002, 62425101), and Shenzhen Science and Technology Program (KQTD20240729102051063).

\section*{Reproducibility statement}

We have made significant efforts to ensure the reproducibility of our work. The full implementation of SERE, including the efficient CUDA kernel, is available in the supplementary material and will be released upon publication. All experimental details, including model configurations, hyperparameters, and evaluation benchmarks, are thoroughly documented in Section \ref{sec:main_exp_setting} and Appendix \ref{sec: experiment_settings}. The expert similarity matrices were computed using standard metrics (Frobenius, Cosine, CKA), as described in Appendix \ref{sec:similarity-metrics}. The calibration datasets (FineWeb, C4, WIKI, OpenCompass) are publicly available. Pseudocode for both similarity estimation (Algorithm \ref{algo:expery_similarity}) and the re-routing mechanism (Algorithm \ref{algo:cuda_sere}) is provided to facilitate replication. Our results can be reproduced using the described setup, and all relevant code and scripts can be found in \url{https://github.com/JL-Cheng/SERE}.

\bibliography{iclr2026_conference}
\bibliographystyle{iclr2026_conference}

\newpage

\appendix


\renewcommand{\contentsname}{Contents of the Appendix}

\section{Appendix on Method}

\subsection{Preliminaries on MoEs}

The MoE architecture enhances model capacity and computational efficiency by conditionally activating only a subset of parameters for each token \citep{shazeer2017outrageously}. An MoE layer consists of a set of expert networks $\{\mathbf{E}_1, \mathbf{E}_2, \dots, \mathbf{E}_M\}$ and a router network $\mathbf{R}$. Given an input token embedding $\mathbf{x}$, the router produces routing logits $\mathbf{R}(\mathbf{x})$, which are transformed into a probability distribution over experts. The output activation $\mathbf{y}$ of an MoE layer can be expressed as:
\begin{equation}
\mathbf{y} = \sum_{i=1}^M \mathbf{P}_i(\mathbf{x}) \cdot \mathbf{E}_i(\mathbf{x}),
\end{equation}
\begin{equation}
\mathbf{E}(\mathbf{x}) = \left( \sigma(\mathbf{x} \mathbf{W}_{\mathrm{gate}}) \odot (\mathbf{x} \mathbf{W}_{\mathrm{up}}) \right) \mathbf{W}_{\mathrm{down}},
\end{equation}
where $\mathbf{W}_{\mathrm{gate}}, \mathbf{W}_{\mathrm{up}} \in \mathbb{R}^{d_h \times d_m}$, $\mathbf{W}_{\mathrm{down}} \in \mathbb{R}^{d_m \times d_h}$, $\sigma(\cdot)$ denotes the activation function, $\odot$ denotes element-wise multiplication, and $P_i(x)$ denotes the normalized routing weight assigned to expert $\mathbf{E}_i$. In practice, a top-$k$ gating strategy is often adopted to reduce computation. Specifically, only the $k$ experts with the largest routing logits are selected:
\begin{equation}
\mathbf{P}_i(\mathbf{x}) = \mathrm{Softmax}(\mathrm{Top}\text{-}k(\mathbf{R}_i(\mathbf{x}))).
\end{equation}
MoE architectures achieve efficient scaling while maintaining strong performance, making them a widely adopted paradigm in modern large-scale Transformer-based models \citep{bai2023qwen,liu2024deepseek,yang2025qwen3, openai2025gptoss120bgptoss20bmodel}. Our proposed SERE method builds on standard MoE architectures and changes only the decoding-time routing targets, preserving all parameters and layer structures.

\subsection{Expert Similarity Metrics} 
\label{sec:similarity-metrics}

\subsubsection{Cosine Similarity}
Given activation matrices $\mathbf{X}_{\mathbf{E}}, \mathbf{X}_{\mathbf{F}} \in \mathbb{R}^{n \times d}$ from two experts $\mathbf{E}$ and $\mathbf{F}$, the cosine similarity is computed by averaging the instance-wise cosine similarities between their outputs.  
For each input $i$, let $\mathbf{x}_{\mathbf{E}}^{(i)}$ and $\mathbf{x}_{\mathbf{F}}^{(i)}$ denote the $i$-th row vectors of $\mathbf{X}_{\mathbf{E}}$ and $\mathbf{X}_{\mathbf{F}}$.  
The overall cosine similarity is computed by
\begin{equation}
\mathcal{M}_{\cos}(\mathbf{E}, \mathbf{F}) =
\frac{1}{n} \sum_{i=1}^{n} 
\frac{\langle \mathbf{x}_{\mathbf{E}}^{(i)},\, \mathbf{x}_{\mathbf{F}}^{(i)} \rangle}
{\| \mathbf{x}_{\mathbf{E}}^{(i)} \|_2 \, \| \mathbf{x}_{\mathbf{F}}^{(i)} \|_2}.
\end{equation}

\subsubsection{Frobenius Similarity}

We measure Frobenius similarity between two experts $\mathbf{E}$ and $\mathbf{F}$ by first calculating the Frobenius norm of the difference between their activation matrices, and then normalizing this value by the maximum norm across all expert pairs.
Let 
\begin{equation}
x_{\mathbf{E},\mathbf{F}} = \| \mathbf{X}_{\mathbf{E}} - \mathbf{X}_{\mathbf{F}} \|_F,
\end{equation}
and let $\max(x)$ denote the maximum $x_{\mathbf{E},\mathbf{F}}$ among all pairs $(\mathbf{E},\mathbf{F})$.  
The normalized Frobenius similarity is then given by
\begin{equation}
\mathcal{M}_{\mathrm{fro}}(\mathbf{E}, \mathbf{F}) 
= 1 - \frac{x_{\mathbf{E},\mathbf{F}}}{\max(x)}.
\end{equation}
This formulation ensures that the most similar expert pair achieves a score close to $1$, while the least similar pair approaches $0$.

\subsubsection{Centered Kernel Alignment} 

Centered Kernel Alignment (CKA)~\citep{kornblith2019similarity} is a widely used metric for quantifying the similarity between neural representations, as it is invariant to isotropic scaling and orthogonal transformations. CKA computes the similarity between two sets of expert representations by comparing their Gram matrices constructed with a chosen kernel function. In our experiments, we consider three types of kernels: linear, RBF (Gaussian), and polynomial.

Given $\mathbf{X}_{\mathbf{E}}, \mathbf{X}_{\mathbf{F}} \in \mathbb{R}^{n \times d}$, the CKA similarity is defined by
\begin{equation}
\mathcal{M}_{\mathrm{CKA}}(\mathbf{E}, \mathbf{F}) =
\frac{\mathrm{HSIC}(\mathbf{K}_{\mathbf{E}}, \mathbf{K}_{\mathbf{F}})}
{\sqrt{\mathrm{HSIC}(\mathbf{K}_{\mathbf{E}}, \mathbf{K}_{\mathbf{E}})\,
        \mathrm{HSIC}(\mathbf{K}_{\mathbf{F}}, \mathbf{K}_{\mathbf{F}})}},
\end{equation}
where $\mathbf{K}_{\mathbf{E}}$ and $\mathbf{K}_{\mathbf{F}}$ are $n \times n$ Gram matrices computed by kernel $k(\cdot,\cdot)$:

\begin{itemize}[leftmargin=8pt,itemsep=1pt]
    \item \textbf{Linear Kernel:}
    \begin{equation}
    \mathbf{K}_{\mathbf{E}} = \mathbf{X}_{\mathbf{E}} \mathbf{X}_{\mathbf{E}}^\top,\quad
    \mathbf{K}_{\mathbf{F}} = \mathbf{X}_{\mathbf{F}} \mathbf{X}_{\mathbf{F}}^\top.
    \end{equation}
    \item \textbf{RBF (Gaussian) Kernel:}
    \begin{equation}
    [\mathbf{K}_{\mathbf{E}}]_{ij} = \exp\!\left(-\frac{\| \mathbf{x}_{\mathbf{E}}^{(i)} - \mathbf{x}_{\mathbf{E}}^{(j)} \|_2^2}{2\sigma^2}\right),\quad
    [\mathbf{K}_{\mathbf{F}}]_{ij} = \exp\!\left(-\frac{\| \mathbf{x}_{\mathbf{F}}^{(i)} - \mathbf{x}_{\mathbf{F}}^{(j)} \|_2^2}{2\sigma^2}\right),
    \end{equation}
    where $\sigma$ is the bandwidth parameter.
    \item \textbf{Polynomial Kernel:}
    \begin{equation}
    [\mathbf{K}_{\mathbf{E}}]_{ij} =
    \left(\mathbf{x}_{\mathbf{E}}^{(i)\top} \mathbf{x}_{\mathbf{E}}^{(j)} + c\right)^d,\quad
    [\mathbf{K}_{\mathbf{F}}]_{ij} =
    \left(\mathbf{x}_{\mathbf{F}}^{(i)\top} \mathbf{x}_{\mathbf{F}}^{(j)} + c\right)^d,
    \end{equation}
    where $c$ is a constant and $d$ is the degree of the polynomial.
\end{itemize}

Here, $\mathrm{HSIC}$ denotes the Hilbert-Schmidt Independence Criterion, which measures the dependence between two Gram matrices. For practical implementation, we use the unbiased HSIC estimator as introduced by \cite{kim2023stability}, which provides $O(n^2)$ computational complexity.

\subsection{\rebuttal{Parameter-based Similarity Computation Methods}}
\label{sec:appendix_param_sim}

\rebuttal{
We implement some data-free, parameter-based methods for computing expert similarities to compare against the activation‑based methods. Considering each expert consists of three weight matrices, namely $\mathbf{\theta_1} = \mathbf{W}_{\text{up}}$, $\mathbf{\theta_2} = \mathbf{W}_{\text{gate}}$, and $\mathbf{\theta_3}=\mathbf{W}_{\text{down}}$, we follow~\cite{zhang2025diversifying} and apply two parameter combination strategies to merge these weights:
}
\begin{itemize}[leftmargin=*]\item \rebuttal{\textbf{Concat}: The three weight matrices are directly concatenated: $\{\mathbf{\theta_1}, \mathbf{\theta_2}, \mathbf{\theta_3}\}$. This method treats all weights equally without considering their functional roles in expert computation.}
\item \rebuttal{\textbf{Logic}: The three weight matrices are combined according to the computational structure of an MoE expert, expressed as $\mathbf{\theta_3} (\mathbf{\theta_1} \cdot \mathbf{\theta_2})$. This approach reflects the structural dependency among the three components.}\end{itemize}
\rebuttal{After obtaining the combined expert weights, we compute the similarity matrices using the similarity metrics described in Appendix~\ref{sec:similarity-metrics}. As discussed in Section~\ref{sec:ablation_study}, the parameter-based methods perform noticeably worse than the activation-based methods. Although parameter-based approaches are data-free, they are less effective at capturing the functional redundancy among experts.}

\subsection{Pseudocode}
\label{sec:pesudocode}

We provide the pseudocode for expert similarity estimation (Algorithm~\ref{algo:expery_similarity}) and the CUDA‑accelerated implementation of SERE (Algorithm~\ref{algo:cuda_sere}) to facilitate readers’ understanding of our approach. The pseudocode presents the key computational steps, helping to bridge the gap between the conceptual description and its practical realization.

\subsection{\rebuttal{Theoretical Analysis}}
\label{sec:theoretical_analysis}

\rebuttal{
In this section, we provide the theoretical justification that similarity-based expert re-routing can better preserve model capabilities.
}

\rebuttal{
\begin{definition}[MoE Layer Structure]
Consider a MoE model composed of $k$ MoE layers, where the $i$-th layer consists of $M$ experts $\{\mathbf{E}^{(i)}_1, \mathbf{E}^{(i)}_2, \dots, \mathbf{E}^{(i)}_M\}$. For an input $z \in \mathbb{R}^{d}$, the layer output is a convex combination:
\begin{equation}
\mathcal{N}_i(z) = \sum_{m=1}^M w_m^{(i)}(z) \cdot \mathbf{E}^{(i)}_m(z),
\end{equation}
where $w_m^{(i)}(z) \geq 0$ and $\sum_{m=1}^M w_m^{(i)}(z) = 1$ are routing weights determined by a router function. Let $\mathcal{D}_0$ denote the input data distribution, and $\mathcal{D}_i$ be the induced distribution of inputs to layer $i$, obtained by propagating samples $x \sim \mathcal{D}_0$ through the preceding layers.
\end{definition}
\begin{definition}[Expert Similarity]
For two experts $\mathbf{E}^{(i)}_a$ and $\widetilde{\mathbf{E}}^{(i)}_a$ at position $a$ in layer $i$, their similarity under the input distribution $\mathcal{D}_i$ is defined as:
\begin{equation}
\delta(\mathbf{E}^{(i)}_a, \widetilde{\mathbf{E}}^{(i)}_a) = \mathbb{E}_{z \sim \mathcal{D}_i} \left[ \|\mathbf{E}^{(i)}_a(z) - \widetilde{\mathbf{E}}^{(i)}_a(z)\|_2 \right].
\end{equation}
\end{definition}
}
\rebuttal{
\begin{theorem}[Expert Substitution Error Bound]
We consider replacing a single expert $\mathbf{E}^{(i)}_a$ in layer $i$ with another expert $\widetilde{\mathbf{E}}^{(i)}_a$ while keeping all other experts and routing weights unchanged, yielding a modified layer $\widetilde{\mathcal{N}}_i$. Let $\mathcal{F} = \mathcal{N}_k \circ \cdots \circ \mathcal{N}_1$ be the original network, and $\widetilde{\mathcal{F}} = \mathcal{N}_k \circ \cdots \circ \widetilde{\mathcal{N}}_i \circ \cdots \circ \mathcal{N}_1$ be the network with expert $\mathbf{E}^{(i)}_a$ replaced by $\widetilde{\mathbf{E}}^{(i)}_a$.
Assume each downstream module $\mathcal{N}_j$ for $j = i+1, \dots, k$ is Lipschitz continuous with constant $L_j$, and define $\Lambda = \prod_{j=i+1}^{k} L_j$. Let $w_a^{(i)}(z)$ be the routing weight assigned to expert $a$. Then the substitution error satisfies
\begin{equation}
E(\widetilde{\mathbf{E}}^{(i)}_a, i) \leq \Lambda \cdot \mathbb{E}_{z \sim \mathcal{D}_i} \left[ w_a^{(i)}(z) \cdot \|\mathbf{E}^{(i)}_a(z) - \widetilde{\mathbf{E}}^{(i)}_a(z)\|_2 \right] \leq \Lambda \cdot \delta(\mathbf{E}^{(i)}_a, \widetilde{\mathbf{E}}^{(i)}_a),
\end{equation}
where the substitution error is
\begin{equation}
E(\widetilde{\mathbf{E}}^{(i)}_a, i) = \mathbb{E}_{x \sim \mathcal{D}_0}\big[\|\mathcal{F}(x) - \widetilde{\mathcal{F}}(x)\|_2\big].
\end{equation}
\end{theorem}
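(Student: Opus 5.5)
The plan is a standard perturbation-propagation argument in three steps: isolate the error at layer $i$, push it through the downstream layers using their Lipschitz constants, and then take expectations over the data distribution.

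\textbf{Step 1 (coupling and reduction to layer $i$).} Fix $x \sim \mathcal{D}_0$ and let $z = (\mathcal{N}_{i-1}\circ\cdots\circ\mathcal{N}_1)(x)$. The first $i-1$ layers are identical in $\mathcal{F}$ and $\widetilde{\mathcal{F}}$, so both networks feed the same $z$ into layer $i$, and by definition $z \sim \mathcal{D}_i$. Write $G = \mathcal{N}_k\circ\cdots\circ\mathcal{N}_{i+1}$. Then $\mathcal{F}(x) = G(\mathcal{N}_i(z))$ and $\widetilde{\mathcal{F}}(x) = G(\widetilde{\mathcal{N}}_i(z))$.

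\textbf{Step 2 (Lipschitz propagation).} Since each $\mathcal{N}_j$ with $j>i$ is $L_j$-Lipschitz, the composition $G$ is $\Lambda$-Lipschitz. A short induction on the number of composed maps gives this. Hence
\begin{equation*}
\|\mathcal{F}(x)-\widetilde{\mathcal{F}}(x)\|_2 \le \Lambda\,\|\mathcal{N}_i(z)-\widetilde{\mathcal{N}}_i(z)\|_2 .
\end{equation*}

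\textbf{Step 3 (local difference).} The routing weights and all experts other than the one at position $a$ are unchanged. So every term of the convex combination cancels except the $a$-th, giving
\begin{equation*}
\mathcal{N}_i(z)-\widetilde{\mathcal{N}}_i(z) = w_a^{(i)}(z)\bigl(\mathbf{E}^{(i)}_a(z)-\widetilde{\mathbf{E}}^{(i)}_a(z)\bigr).
\end{equation*}
Taking norms and using $w_a^{(i)}(z)\ge 0$ yields the pointwise bound $\Lambda\, w_a^{(i)}(z)\,\|\mathbf{E}^{(i)}_a(z)-\widetilde{\mathbf{E}}^{(i)}_a(z)\|_2$.

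\textbf{Step 4 (expectation).} Take the expectation over $x\sim\mathcal{D}_0$. Because the bound depends on $x$ only through $z$, and $z$ has law $\mathcal{D}_i$, this expectation equals the expectation over $z\sim\mathcal{D}_i$, which is the first inequality. For the second inequality, use $w_a^{(i)}(z)\le 1$, which follows from nonnegativity together with $\sum_m w^{(i)}_m(z)=1$. This gives the bound $\Lambda\,\delta(\mathbf{E}^{(i)}_a,\widetilde{\mathbf{E}}^{(i)}_a)$.

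\textbf{Main obstacle.} The only delicate point is the change of measure in Step 4. It requires that the input to layer $i$ in the modified network has the same distribution $\mathcal{D}_i$ as in the original. This holds because the substitution happens at layer $i$ itself, so nothing upstream changes. A secondary point is measurability and integrability of the expert outputs, which I would simply assume, since otherwise the expectations may be infinite and the bound is trivially true. The Lipschitz constant must also be taken with respect to $\|\cdot\|_2$, matching the norm used in the statement.
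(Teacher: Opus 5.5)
Your proposal is correct and follows essentially the same argument as the paper: you reduce the layer-$i$ discrepancy to $w_a^{(i)}(z)\bigl(\mathbf{E}^{(i)}_a(z)-\widetilde{\mathbf{E}}^{(i)}_a(z)\bigr)$, propagate it through the $\Lambda$-Lipschitz downstream composition, take expectations by pushing $\mathcal{D}_0$ forward to $\mathcal{D}_i$, and use $0\le w_a^{(i)}\le 1$ for the second inequality. Your remarks on the change of measure and on integrability make explicit two points that the paper's proof leaves implicit.
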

}
\rebuttal{
\begin{proof}
For any $x \sim \mathcal{D}_0$, let $z_i = (\mathcal{N}_{i-1} \circ \cdots \circ \mathcal{N}_1)(x) \sim \mathcal{D}_i$. The layer output difference is
\begin{equation}
    \mathcal{N}_i(z_i) - \widetilde{\mathcal{N}}_i(z_i) = w_a^{(i)}(z_i) \left( \mathbf{E}^{(i)}_a(z_i) - \widetilde{\mathbf{E}}^{(i)}_a(z_i) \right).
\end{equation}
Let $\mathcal{G} = \mathcal{N}_k \circ \cdots \circ \mathcal{N}_{i+1}$, which is $\Lambda$-Lipschitz. Then,
\begin{align}
\|\mathcal{F}(x) - \widetilde{\mathcal{F}}(x)\|_2 
&= \big\| \mathcal{G}(\mathcal{N}_i(z_i)) - \mathcal{G}(\widetilde{\mathcal{N}}_i(z_i)) \big\|_2 \nonumber \\
&\leq \Lambda \cdot \|\mathcal{N}_i(z_i) - \widetilde{\mathcal{N}}_i(z_i)\|_2 \nonumber \\
&= \Lambda \cdot w_a^{(i)}(z_i) \cdot \|\mathbf{E}^{(i)}_a(z_i) - \widetilde{\mathbf{E}}^{(i)}_a(z_i)\|_2.
\end{align}
Taking expectation over $x \sim \mathcal{D}_0$ gives
\begin{align}
E(\widetilde{\mathbf{E}}^{(i)}_a, i) 
&\leq \Lambda \cdot \mathbb{E}_{z_i \sim \mathcal{D}_i} \left[ w_a^{(i)}(z_i) \cdot \|\mathbf{E}^{(i)}_a(z_i) - \widetilde{\mathbf{E}}^{(i)}_a(z_i)\|_2 \right] \nonumber \\
&\leq \Lambda \cdot \mathbb{E}_{z_i \sim \mathcal{D}_i} \left[ \|\mathbf{E}^{(i)}_a(z_i) - \widetilde{\mathbf{E}}^{(i)}_a(z_i)\|_2 \right] \nonumber \\
&= \Lambda \cdot \delta(\mathbf{E}^{(i)}_a, \widetilde{\mathbf{E}}^{(i)}_a),
\end{align}
where the second inequality follows from $0 \leq w_a^{(i)}(z_i) \leq 1$. This completes the proof.
\end{proof}
}
\rebuttal{
This analysis shows that the error bound of expert substitution is jointly determined by the structural stability of downstream layers ($\Lambda$) and the similarity between experts ($\delta(\cdot,\cdot)$). Therefore, under a fixed model architecture, re-routing tokens to a more similar expert yields a tighter upper bound on output perturbation. The above analysis provides theoretical support for the SERE method.
}

\begin{algorithm}[H]
\SetAlgoLined
\caption{Expert Similarity Estimation}
\label{algo:expery_similarity}
\KwIn{
    Calibration dataset $\mathcal{D}_{\mathrm{calib}}$; \\
    Number of iterations $N$; \\
    Mixture-of-Experts (MoE) model with $L$ layers, each containing $M$ experts $\mathbf{E}^{(l)}_{1},\dots,\mathbf{E}^{(l)}_{M}$; \\
    Similarity function $\mathrm{Sim}(\cdot,\cdot)$ \\
}
\KwOut{
    Layer-wise similarity matrices $\{\mathbf{S}^{(l)} \in \mathbb{R}^{M\times M}\}_{l=1}^{L}$
}
\BlankLine
\For{$l \gets 1$ \KwTo $L$}{
    $\mathbf{S}^{(l)} \gets \mathbf{0}_{M \times M}$ \tcp*{Initialize similarity matrix for layer $l$}
}
\For{$i \gets 1$ \KwTo $N$}{
    $\mathcal{B} \gets$ the $i$-th batch from $\mathcal{D}_{\mathrm{calib}}$\tcp*{Load calibration dataset}
    $\mathbf{X}^{(0)} \gets \mathcal{B}$\tcp*{Input to the first layer}

    \For{$l \gets 1$ \KwTo $L$}{
        \For{$j \gets 1$ \KwTo $M$}{
            $\mathbf{A}^{(l)}_{j} \gets \mathbf{E}^{(l)}_{j}\big(\mathbf{X}^{(l-1)}\big)$\tcp*{Calculate activation for all experts.}
        }

        \For{$p \gets 1$ \KwTo $M$}{
            \For{$q \gets p$ \KwTo $M$}{
                $s \gets \mathrm{Sim}\big(\mathbf{A}^{(l)}_{p}, \mathbf{A}^{(l)}_{q}\big)$\tcp*{Accumulate pairwise similarities}
                $\mathbf{S}^{(l)}[p,q] \mathrel{+}= s$; \\
                $\mathbf{S}^{(l)}[q,p] \mathrel{+}= s$\tcp*{Ensure symmetry}
            }
        }

        $\mathbf{X}^{(l)} \gets \mathrm{MoE}^{(l)}\big(\mathbf{X}^{(l-1)}\big)$\tcp*{Standard MoE forward to get next layer input}
    }
}
\For{$l \gets 1$ \KwTo $L$}{
    $\mathbf{S}^{(l)} \gets \mathbf{S}^{(l)} / N$ \tcp*{Normalize by number of iterations}
}

\Return $\{\mathbf{S}^{(l)}\}_{l=1}^{L}$
\end{algorithm}

\begin{algorithm}[!h]
\SetAlgoLined
\caption{CUDA-Accelerated SERE}
\label{algo:cuda_sere}

\KwIn{
    Top-K expert weights $\mathbf{W}^{(l)} \in \mathbb{R}^{T \times K}$; \\
    Top-K expert indices $\mathbf{I}^{(l)} \in \mathbb{Z}^{T \times K}$; \\
    Expert similarity matrix $\mathbf{S}^{(l)} \in \mathbb{R}^{M \times M}$; \\
    Retain count $S \in [1, K)$; \\
    Similarity threshold $\rho \in [0,1]$
} 
\KwOut{
    Re-routed expert indices $\mathbf{I}'^{(l)} \in \mathbb{Z}^{T \times K}$
}
$\mathbf{I}'^{(l)} \gets \mathbf{I}^{(l)}$; 
$\mathcal{H} \gets \mathbf{0}_M$\tcp*{Initialization}
\For{$t \gets 1$ \KwTo $T$\ \textbf{and}\ $s \gets 1$ \KwTo $S$}{
    $\mathcal{H}[I^{(l)}_{t,s}] \gets 1$\tcp*{Mark current (primary) expert as retained}
}
$R_{total} \gets T \times (K - S)$\tcp*{All secondary experts to be re-routed.}

\For{\textbf{each CUDA thread} $tid \in [0, R_{total})$\ \textbf{in parallel}}{
    $t \gets \lfloor tid / (K - S) \rfloor$;\\
    $k \gets S + (tid \bmod (K - S))$ \tcp*{Current token index}
    \lIf{$t \geq T$\ \textbf{or}\ $k \geq K$}{\textbf{return}}
    $e_{orig} \gets \mathbf{I}^{(l)}_{t, k}$\tcp*{Original expert}

    \If{$\mathcal{H}[e_{orig}] = 1$}{
        $\mathbf{I}'^{(l)}_{t, k} \gets e_{orig}$ \tcp*{No change if already retained}
        \textbf{continue};
    }

    $s_{best} \gets -\infty$, $e_{best} \gets 0$ \tcp*{Init maximum similarity and best matched expert}

    \For{$e \gets 0$ \KwTo $M-1$}{
        \If{$\mathcal{H}[e] = 1$}{
            $s_{curr} \gets S^{(l)}[e_{orig}, e]$ \tcp*{Pairwise similarity with retained experts}
            \If{$s_{curr} > s_{best}$}{
                $s_{best} \gets s_{curr}$, $e_{best} \gets e$ \tcp*{Update best similarity}
            }
        }
    }

    \uIf{$\rho > 0$\ \textbf{and}\ $s_{best} < \rho$}{
        $\mathbf{I}'^{(l)}_{t, k} \gets e_{orig}$ \tcp*{Keep original if below threshold}
    }
    \Else{
        $\mathbf{I}'^{(l)}_{t, k} \gets e_{best}$ \tcp*{Re-route to the best matched retained expert}
    }
}

\Return $\mathbf{I}'^{(l)}$
\end{algorithm}

\section{Appendix on Experiment Settings}
\label{sec: experiment_settings}

\subsection{Models}

\begin{table}[!htp]
    \centering
    \small
    \begin{tabular}{lccc}
        \toprule
        \rowcolor{gray!25}
        \textbf{Model Config} & \textbf{Qwen1.5-A2.7B-Chat} & \textbf{DeepSeekV2-Lite} & \textbf{Qwen3-30B-A3B} \\
        \midrule
        Total Params (B) & 14.3 & 16 & 30 \\
        Activated Params (B) & 2.7 & 2.4 & 3 \\
        MoE Layers / Total Layers & 24/24 & 26/27 & 48/48 \\
        Experts per MoE Layer & 60 & 64 & 128 \\
        Activated Experts per Token & 4 (selected) + 4 (shared) & 6 (selected) + 2 (shared) & 8 \\
        hidden size & 2560 & 2048 & 2048 \\
        intermediate size & 5632 & 10944 & 6144 \\
        Vocabulary Size & 151936 & 102400 & 151936 \\
        \bottomrule
        \toprule
        \rowcolor{gray!25}
        \textbf{Inference Setting} & \textbf{Qwen1.5-A2.7B-Chat} & \textbf{DeepSeekV2-Lite} & \textbf{Qwen3-30B-A3B} \\
        \midrule
        Temperature & 0.7 & 0.3 & 0.7 \\
        Top-$p$ & 0.8 & 0.95 & 0.8 \\
        Top-$k$ & 20 & 50 & 20 \\
        Repetition Penalty & 1.05 & 1.00 & 1.00 \\
        Max Output Tokens & 1024 & 1024 & 2048 \\
        Batch Size & 16 & 16 & 16 \\
        \bottomrule
    \end{tabular}
    \caption{Main inference hyperparameters for each model.}
    \label{tab:moe_model_and_infer_settings}
\end{table}

We evaluate SERE on three representative MoE models: Qwen1.5‑MoE‑A2.7B‑Chat \citep{bai2023qwen}, DeepSeekV2‑Lite \citep{liu2024deepseek}, and Qwen3‑30B‑A3B \citep{yang2025qwen3}.

\textbf{Qwen1.5-MoE-A2.7B-Chat:} Each token activates $4$ shared experts and $4$ routed experts (out of $60$) in each layer.

\textbf{DeepSeekV2-Lite:} Each token activates $2$ shared experts and $6$ routed experts (out of $64$) in each layer.

\textbf{Qwen3-30B-A3B:} Each token activates $8$ routed experts (out of $128$) in each layer.

More details can be found in Table~\ref{tab:moe_model_and_infer_settings}.

\subsection{Hyper-Parameters}
For expert skipping, we evaluate two configurations that retain the Top‑$1$ and Top‑$2$ experts as the primary experts. 
For expert merging, we select pruning rates that yield TPOT comparable to that of expert skipping methods, ensuring a fair comparison. 
For SERE, similarity matrices are computed using the Frobenius norm on a calibration subset of FineWeb‑Edu~\citep{lozhkov2024fineweb-edu} ($400$ sequences $\times$ $128$ tokens).
The similarity matrices are normalized to $[0,1]$, where larger values indicate higher similarity between experts. 

Tables~\ref{tab:moe_model_and_infer_settings} summarize the main inference configurations for all MoE models studied in this work. 
For the SERE method, the parameters \texttt{select\_top\_k} and \texttt{threshold} are tuned according to ablation and experimental requirements. 
All calibration and experiments are performed on NVIDIA H20 GPUs

\subsection{Benchmarks}

For accuracy comparison, we select a diverse set of complex reasoning tasks from the OpenCompass benchmark \citep{2023opencompass}, covering multiple domains: \texttt{\textbf{Exam}} (CMMLU \citep{li2023cmmlu}, BoolQ \citep{clark2019boolq}, and BBH \citep{suzgun2022challenging}); \texttt{\textbf{Math}} (Math \citep{hendrycksmath2021}, GSM8K \citep{cobbe2021gsm8k}, and Math\_{401} \citep{yuan2023large}); and \texttt{\textbf{Code}} (HumanEval \citep{chen2021evaluating}, MBPP \citep{austin2021program}). Because CMMLU and BoolQ are multiple‑choice tasks, we adopt the CoT mode to evaluate the models’ decoding capabilities. Details and examples of these tasks are provided in Table~\ref{tab:opencompass_tasks}.

For acceleration comparison, we measure the online inference speed of different models under various methods using vLLM \citep{kwon2023efficient}. Each model is deployed on a single GPU, and we record the \textit{Time per Output Token} (TPOT, in ms) across different \textit{Queries per Second} (QPS) settings to emulate real‑world service scenarios. The input and output sequence lengths are fixed at $128$ and $32$ tokens, respectively, and each test processes a total of 5,000 requests.

\begin{table}[!h]
    \centering
    \small
    \begin{tabular}{p{2.0cm} p{2.5cm} p{7.5cm}}
        \toprule
        \textbf{Task} & \textbf{Domain/Format} & \textbf{Description / Example} \\
        \midrule
        \textbf{CMMLU} \citep{li2023cmmlu} & Exam / Multiple-Choice & A comprehensive Chinese multi-subject exam benchmark with 57 subjects.\newline \\
        & & \textit{Example:}  
        \begin{CJK}{UTF8}{gbsn} 关系数据库中数据的逻辑结构是（A）树结构（B）维度表（C）层次结构（D）形状结构
        \end{CJK}  
        \\
        \midrule
        \textbf{BoolQ} \citep{clark2019boolq} & Exam / Multiple-Choice (Yes/No) & Reading comprehension questions with yes/no answers based on a passage.\newline \\
        & & \textit{Example:} Property tax -- Property tax or `house tax' is a local tax ... Is house tax and property tax are same?\\
        \midrule
        \textbf{BBH} \citep{suzgun2022challenging} & Exam / Diverse Reasoning & Big-Bench Hard, a collection of challenging tasks covering logical, symbolic, and commonsense reasoning. \newline \\
        & & \textit{Example:} Which sentence has the correct adjective order: \textbackslash n(A) medium-size archaic prismlike purple American car\textbackslash n(B) archaic purple prismlike American medium-size car\\
        \midrule
        \textbf{Math} \citep{hendrycksmath2021} & Math / Open-Ended & A dataset of high school-level mathematical problems requiring step-by-step solutions.\\
        & & \textit{Example:} A positive multiple of 45 less than 1000 is randomly selected. What is the probability that it is a two-digit integer? Express your answer as a common fraction.\\
        \midrule
        \textbf{GSM8K} \citep{cobbe2021gsm8k} & Math / Open-Ended & Grade school math word problems with a focus on multi-step reasoning.\\
        & & \textit{Example:} Shiloh is 44 years old today.  In 7 years, he will be three times as old as his nephew.  How old is his nephew today?\\
        \midrule
        \textbf{Math\_401} \citep{yuan2023large} & Math / Open-Ended & MATH 401 is a benchmark dataset specifically designed to evaluate the arithmetic capabilities of large language models through a variety of arithmetic expressions and detailed performance analysis.\newline \\
        & & \textit{Example:} 7.3947**2.5384=\\
        \midrule
        \textbf{HumanEval} \citep{chen2021evaluating} & Code / Code Generation & Python programming problems requiring function implementation based on a natural language description.\\
        & & \textit{Example:} Write a function that returns the sum of two numbers.\\
        \midrule
        \textbf{MBPP} \citep{austin2021program} & Code / Code Generation & Mostly Basic Python Problems: Short Python programming tasks with input-output examples. \newline\\
        & & \textit{Example:} Write a function to check if a string is a palindrome.\\
        \bottomrule
    \end{tabular}
    \caption{Overview of OpenCompass tasks used for evaluation.}
    \label{tab:opencompass_tasks}
\end{table}

\subsection{Calibration Dataset}
\label{sec:calibration_dataset}

In this work, we employ several calibration datasets to estimate expert similarity within MoE models, including three general datasets: FineWeb-Edu~\citep{lozhkov2024fineweb-edu}, WIKI~\citep{merity2016pointer}, C4~\citep{JMLR:v21:20-074}, \rebuttal{and four Domain-Specific datasets: Math, Code, Exam, and OpenCompass} . These calibration sets are used to perform forward passes through the model, collecting activation values for each expert at every layer. The resulting activations are then utilized to compute inter-expert similarity metrics, which guide subsequent rerouting strategies.

\textbf{FineWeb-Edu}~\citep{lozhkov2024fineweb-edu} is a large-scale, high-quality English web corpus designed for pre-training and evaluation of language models. It contains diverse and well-filtered content, making it a representative resource for general-purpose calibration.

\textbf{WIKI}~\citep{merity2016pointer} refers to the English Wikipedia dump, a widely adopted dataset in NLP research. Its encyclopedic coverage and high linguistic quality make it suitable for calibrating models on general knowledge and formal text.

\textbf{C4} (Colossal Clean Crawled Corpus)~\citep{JMLR:v21:20-074} is a massive web-crawled dataset filtered for high-quality English text. It is commonly used in large-scale language model pre-training and serves as a robust calibration set for open-domain language understanding.

\rebuttal{\textbf{Math} is a domain-specific dataset constructed from Math~\citep{hendrycksmath2021}, GSM8K~\citep{cobbe2021gsm8k}, and Math401~\citep{yuan2023large} within OpenCompass. We randomly sample prompts and answers from these benchmarks and shuffle them to form the calibration set.}

\rebuttal{\textbf{Code} is a domain-specific dataset constructed from HumanEval~\citep{chen2021evaluating} and MBPP~\citep{austin2021program} within OpenCompass. We randomly sample prompts and answers from these benchmarks and shuffle them to form the calibration set.}

\rebuttal{\textbf{Exam} is a domain-specific dataset constructed from CMMLU~\citep{li2023cmmlu}, BoolQ~\citep{clark2019boolq} and BBH~\citep{suzgun2022challenging} within OpenCompass. We randomly sample prompts and answers from these benchmarks and shuffle them to form the calibration set.}

\rebuttal{\textbf{OpenCompass} combines the three domain-specific calibration datasets above and generates the calibration data through uniform sampling.}

For each calibration dataset, we randomly sample $N$ sequences and select a fixed number of tokens (Length) from each sequence. 
FineWeb‑Edu, WIKI, and C4 are used as general‑purpose calibration sets to evaluate SERE's performance under broad, diverse language phenomena, \rebuttal{while Math, Code, Exam, and OpenCompass serve as task‑specific calibration sets, aimed at testing whether downstream‑oriented calibration data can further enhance SERE's capabilities, as well as the generalization or stability across different domains.}



\section{\rebuttal{Appendix on Experiments}}

\subsection{\rebuttal{Detailed Analysis on Similarity Threshold}}
\label{sec:Threshold_Analysis}


\rebuttal{To better understand the relationship between the similarity threshold $\rho$ and model performance, we conduct a fine-grained empirical study on Qwen3-30B-A3B under $K=1$ setting. 
Table~\ref{tab:threshold_ablation} summarizes the performance across a range of $\rho$ values.  
The experimental results show that reasoning intensive tasks, such as mathematical problem solving and code generation, require higher similarity threshold compared with knowledge oriented tasks such as exam. For example, when $\rho$ reaches $0.5$, the performance on Exam benchmarks is already close to the baseline, while the performance on Math and Code benchmarks still exhibits a noticeable gap. This suggests that complex reasoning relies more critically on high-fidelity expert routing than factual recall.
}

\begin{table}[htbp]
\centering
\resizebox{.96\textwidth}{!}{
\begin{tabular}{c|ccc|ccc|cc|c|c}
\toprule
\rowcolor{gray!25}
\textbf{Threshold} & \textbf{cmmlu} & \textbf{boolq} & \textbf{bbh} & \textbf{math} & \textbf{gsm8k} & \textbf{math401} & \textbf{heval} & \textbf{mbpp} & \textbf{avg} & \textbf{TPOT} \\
\midrule
0.0 & 60.53 & 85.08 & 57.64 & 46.98 & 52.08 & 52.12 & 32.32 & 31.40 & 52.27 & 28.04 \\
\rowcolor{gray!10}
0.1 & 60.79 & 85.20 & 56.55 & 46.54 & 51.10 & 54.11 & 34.15 & 34.20 & 52.83 & 29.19 \\
0.2 & 62.83 & 85.90 & 58.46 & 47.28 & 52.08 & 51.62 & 35.37 & 32.60 & 53.27 & 30.72 \\
\rowcolor{gray!10}
0.3 & 65.24 & 85.60 & 59.17 & 47.90 & 53.37 & 54.11 & 39.63 & 32.40 & 54.68 & 29.21 \\
0.4 & 72.11 & 87.61 & 61.78 & 48.56 & 53.30 & 54.61 & 45.12 & 34.20 & 57.16 & 29.81 \\
\rowcolor{gray!10}
0.5 & 77.89 & 89.76 & 65.45 & 53.40 & 54.28 & 54.86 & 64.02 & 53.20 & 64.11 & 33.10 \\
0.6 & 80.77 & 89.91 & 71.33 & 59.56 & 63.00 & 63.34 & 83.54 & 68.80 & 72.53 & 34.28 \\
\rowcolor{gray!10}
0.7 & 80.92 & 90.31 & 74.87 & 70.24 & 88.40 & 82.04 & 86.59 & 74.20 & 80.95 & 35.37 \\
0.8 & 84.08 & 89.94 & 76.10 & 70.92 & 89.39 & 81.30 & 86.59 & 76.60 & 81.86 & 38.92 \\
\rowcolor{gray!10}
0.9 & 84.33 & 89.82 & 76.66 & 72.42 & 89.61 & 79.05 & 86.59 & 75.60 & 81.76 & 46.02 \\
1.0 & 84.92 & 89.82 & 76.62 & 72.46 & 88.93 & 81.30 & 88.41 & 78.00 & 82.56 & 44.54 \\
\bottomrule
\end{tabular}
}
\caption{\rebuttal{Performance of Qwen3-30B-A3B under $K=1$ setting across different thresholds.}}
\label{tab:threshold_ablation}
\end{table}

\rebuttal{In summary, the similarity threshold serves as a principled mechanism to balance efficiency and model performance. The empirical results suggest that setting $\rho$ to moderate or high values significantly improves performance on challenging tasks, primarily by eliminating a part of detrimental set of low-similarity rerouting decisions.}

\subsection{\rebuttal{Detailed Analysis on Prefilling Stage}}
\label{sec:prefill_analysis}

\rebuttal{SERE is primarily designed to accelerate the batched decoding phase of MoE models. By reducing the number of activated experts, it lowers the memory‑communication overhead and thus speeds up the memory‑bound decoding process. Because it does not reduce the computation FLOPs, it is not expected to provide noticeable speedups in the compute‑bound prefill stage. Nevertheless, to give a more comprehensive understanding of SERE, we additionally conduct experiments evaluating its impact on the prefill stage, including its effect on prefill latency and the quality of the KV cache.}

\rebuttal{We first evaluated the \textbf{Time To First Token (TTFT)} of three MoE models: Qwen1.5‑A2.7B‑Chat, Qwen3‑30B‑A3B, and DeepSeekV2‑Lite, under different QPS settings. As shown in Table~\ref{tab:ttft}, SERE achieves slightly lower TTFT than the baseline, but the improvement is marginal. The results are consistent with our expectations and also indicate that our CUDA-based re‑routing implementation is highly efficient, introducing no additional overhead even when processing a large number of tokens during the prefill stage. In a typical generation scenario (e.g., 128 input tokens followed by 256 output tokens), prefill accounts for less than $1\%$ of the total latency, and this proportion will be even smaller when the outputs become longer. Therefore, we consider acceleration during the decoding stage to be substantially more impactful than acceleration during prefill.}

\begin{table}[!h]
\centering
\small
\begin{tabular}{l c c c c}
\toprule
\rowcolor{gray!25}
Model / QPS & 8 & 16 & 24 & 32 \\
\midrule
Qwen1.5-A2.7B-Chat & 33.64 & 40.62 & 45.33 & 51.43 \\
\rowcolor{gray!10}
SERE ($K=2$) & 33.57 & 38.53 & 45.24 & 50.24 \\
SERE ($K=1$) & 32.48 & 37.64 & 42.58 & 48.10 \\
\midrule
\rowcolor{gray!10}
Qwen3-30B-A3B & 66.72 & 81.08 & 96.02 & 114.03 \\
SERE ($K=2$) & 65.09 & 78.52 & 92.69 & 104.36 \\
\rowcolor{gray!10}
SERE ($K=1$) & 64.94 & 78.62 & 92.25 & 107.44 \\
\midrule
DeepSeekV2-Lite & 67.06 & 82.57 & 93.12 & 106.44 \\
\rowcolor{gray!10}
SERE ($K=2$) & 66.03 & 79.99 & 91.10 & 108.23 \\
SERE ($K=1$) & 66.04 & 79.60 & 92.67 & 107.61 \\
\bottomrule
\end{tabular}
\caption{\rebuttal{TTFT(ms) under varying QPS settings.}}
\label{tab:ttft}
\end{table}

\rebuttal{We further examined whether SERE affects the KV cache generated during the prefill stage, since this could influence the quality of subsequent decoding. We first analyzed the proportion of primary experts among all activated experts under some typical batch settings. As shown in Table~\ref{tab:prefill_retention}, for MoE models with fewer experts, such as Qwen1.5-A2.7B-Chat and DeepSeekV2-Lite, all activated experts are primary experts ($100\%$). Even for Qwen3-30B-A3B that has a larger number of experts, more than $80\%$ of the activated experts are retained as primary experts. These results indicate that nearly all activated experts are preserved as primary experts during prefill. Besides, the small number of secondary experts that require re-routing can also find similar substitutes more easily because the pool of primary experts is large. As a result, the impact on KV cache quality is minimal.}

\begin{table}[!h]
\centering
\small
\begin{tabular}{l c c c}
\toprule
\rowcolor{gray!25}
Model / Batch Config & 32$\times$128 & 16$\times$64 & 4$\times$256 \\
\midrule
Qwen1.5-A2.7B-Chat & 100\% & 100\% & 100\% \\
\rowcolor{gray!10}
Qwen3-30B-A3B & 94.53\% & 86.71\% & 81.65\% \\
DeepSeekV2-Lite & 100\% & 100\% & 100\% \\
\bottomrule
\end{tabular}
\caption{\rebuttal{Percentage of primary experts retained during prefill.}}
\label{tab:prefill_retention}
\end{table}

\begin{table}[!h]
\centering
\small
\renewcommand{\arraystretch}{0.95}
\setlength{\tabcolsep}{4pt}
\resizebox{.92\textwidth}{!}{
\begin{tabular}{l|ccc|ccc|cc|c}
\toprule
\multirow{2}{*}{\textbf{Methods \textbackslash Tasks}}
  & \multicolumn{3}{c|}{\textbf{Exam}} 
  & \multicolumn{3}{c|}{\textbf{Math}} 
  & \multicolumn{2}{c|}{\textbf{Code}} 
  & \multirow{2}{*}{\makecell[c]{\textbf{Avg.} \\ (Acc. $\uparrow$)}} \\
\cmidrule(lr){2-4} \cmidrule(lr){5-7} \cmidrule(lr){8-9} 
& \textbf{cmmlu} & \textbf{boolq} & \textbf{bbh} & \textbf{math} & \textbf{gsm8k} & \textbf{math$_{401}$} & \textbf{heval} & \textbf{mbpp} &  \\
\midrule
Qwen3-30B-A3B \textcolor{gray}{$_{top8}$}           & 84.88 & 90.21 & 76.70 & 72.28 & 89.23 & 79.05 & 87.20 & 78.40 & 82.24 \\
\midrule
Qwen3-30B-A3B \textcolor{gray}{$_{top2}$}          & 10.01 & 60.52 & 10.48 & 3.38  & 6.97  & 16.96 & 3.66  & 2.40  & 14.30 \\
SERE \textcolor{gray}{$_{top2;\ \rho=0.0}$}        & 81.24 & 89.79 & 71.33 & 70.22 & 82.41 & 80.80 & 82.93 & 63.80 & 77.82 \\
\rowcolor{gray!15}
\textbf{SERE (decode-only)} \textcolor{gray}{$_{top2;\ \rho=0.0}$} & 80.31 & 89.42 & 71.81 & 69.60 & 82.41 & 80.80 & 84.15 & 63.60 & 77.33 \\
SERE \textcolor{gray}{$_{top2;\ \rho=0.5}$}        & 81.51 & \textbf{90.37} & \textbf{74.15} & \textbf{72.06} & \textbf{85.97} & \textbf{81.55} & 85.37 & \textbf{72.00} & \textbf{80.37} \\
\rowcolor{gray!15}
\textbf{SERE (decode-only)} \textcolor{gray}{$_{top2;\ \rho=0.5}$} & \textbf{81.65} & 90.12 & 73.50 & 71.22 & 84.38 & 81.05 & \textbf{87.20} & 70.20 & 79.78 \\
\midrule
Qwen3-30B-A3B \textcolor{gray}{$_{top1}$}           & 0.00  & 61.68 & 4.89  & 0.08  & 0.91  & 1.25  & 0.00  & 0.00  & 8.60 \\
SERE \textcolor{gray}{$_{top1;\ \rho=0.0}$}        & 60.53 & 85.08 & 57.64 & 46.98 & 52.08 & 52.12 & 32.32 & 31.40 & 52.27 \\
\rowcolor{gray!15}
\textbf{SERE (decode-only)} \textcolor{gray}{$_{top1;\ \rho=0.0}$} & 62.96 & 85.02 & 57.56 & 46.32 & 50.95 & 52.37 & 37.80 & 32.60 & 51.20 \\
SERE \textcolor{gray}{$_{top1;\ \rho=0.5}$}        & 77.89 & 89.76 & 65.45 & \textbf{53.40} & \textbf{54.28} & \textbf{54.86} & 64.02 & \textbf{53.20} & \textbf{64.11} \\
\rowcolor{gray!15}
\textbf{SERE (decode-only)} \textcolor{gray}{$_{top1;\ \rho=0.5}$} & \textbf{78.68} & \textbf{89.82} & \textbf{65.60} & 52.48 & 53.68 & 53.62 & \textbf{66.46} & 51.00 & 63.34 \\
\bottomrule
\end{tabular}
}
\caption{\rebuttal{SERE vs. decode-only variant on Qwen3-30B-A3B across OpenCompass benchmarks.}}
\label{tab:decode_only_ablation}
\end{table}

\rebuttal{To directly understand how SERE affects the KV cache produced during the prefill stage, we implemented and evaluated a \textbf{decode‑only} variant in which all activated experts are preserved during prefill and re‑routing is applied only during decoding. We tested this setting on the Qwen3-30B-A3B model across OpenCompass benchmarks, and the results are shown in Table~\ref{tab:decode_only_ablation}. Surprisingly, across different skipping rates and thresholds, the decode‑only variant consistently underperforms the original SERE method. We consider this may be because inconsistent expert selection between prefill and decoding stages introduces a distribution shift that particularly affects reasoning tasks that rely on stable internal representations.}

\rebuttal{In summary, although SERE does not provide significant acceleration during the prefill stage, it can be applied safely without degrading KV cache quality or overall performance.}

\subsection{Similarity Matrices Visualization}
\label{sec:appendix-similarity-matrices}

\rebuttal{In this section, we present a detailed visualization of the expert similarity matrices for Qwen1.5‑2.7B~\citep{bai2023qwen}, DeepSeekV2‑Lite~\citep{liu2024deepseekv2}, Qwen3‑30B‑A3B~\citep{yang2025qwen3}, DeepSeekMoE~\citep{dai2024deepseekmoe}, Ling-mini-2.0~\citep{li2025every}, and OLMoE-1B-7B-0125-Instruct~\citep{muennighoff2024olmoe}, as shown in Figure~\ref{fig:deepseekv2_sim_visual} to Figure~\ref{fig:qwen3_sim_visual}, respectively. }
These visualizations reveal that different MoE architectures exhibit distinct similarity patterns across layers, i.e., some layers display highly clustered experts with strong intra‑group similarity, whereas others show more uniform or dispersed similarity distributions. 
Such layer‑specific variation indicates that the functional roles and redundancy levels of experts vary not only between models but also across different layers within the same model, highlighting the importance of layer‑wise analysis when designing expert routing or pruning strategies.

\section{LLM Usage Statement}
In preparing this manuscript, we used LLMs solely to aid in polishing the writing, such as improving grammar, clarity, and readability. All substantive contributions to the research, including the conception of ideas, experimental design, data analysis, and so on, were made exclusively by the authors. The authors have thoroughly reviewed and taken responsibility for all content in the paper.


\begin{figure*}[!h]
    \raggedleft
    \includegraphics[width=.94\textwidth]{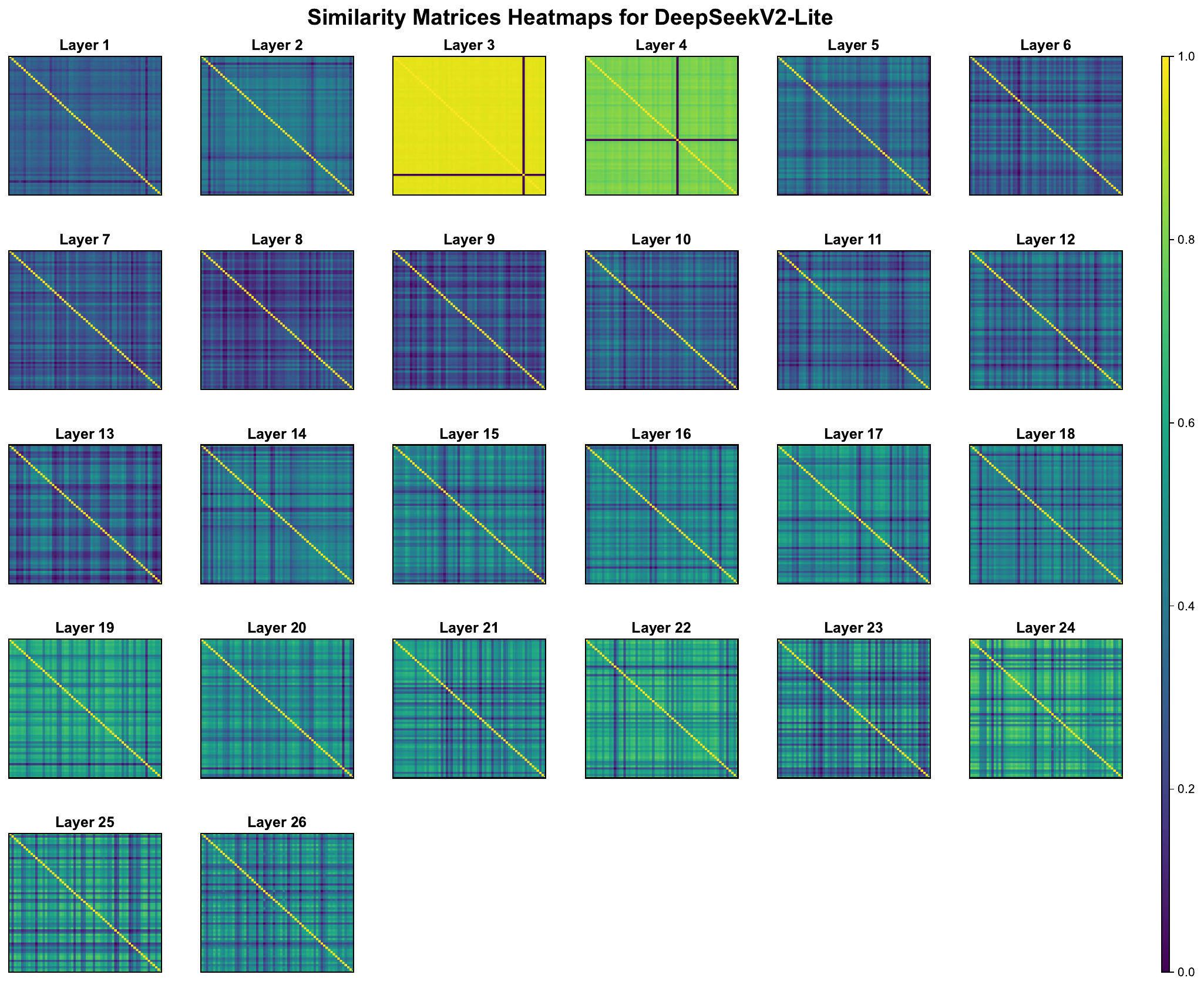}
    \caption{Visualization of expert similarity matrices of DeepSeekV2-Lite model.}
    \label{fig:deepseekv2_sim_visual}
\end{figure*}

\begin{figure*}[!h]
    \raggedleft
    \includegraphics[width=.94\textwidth]{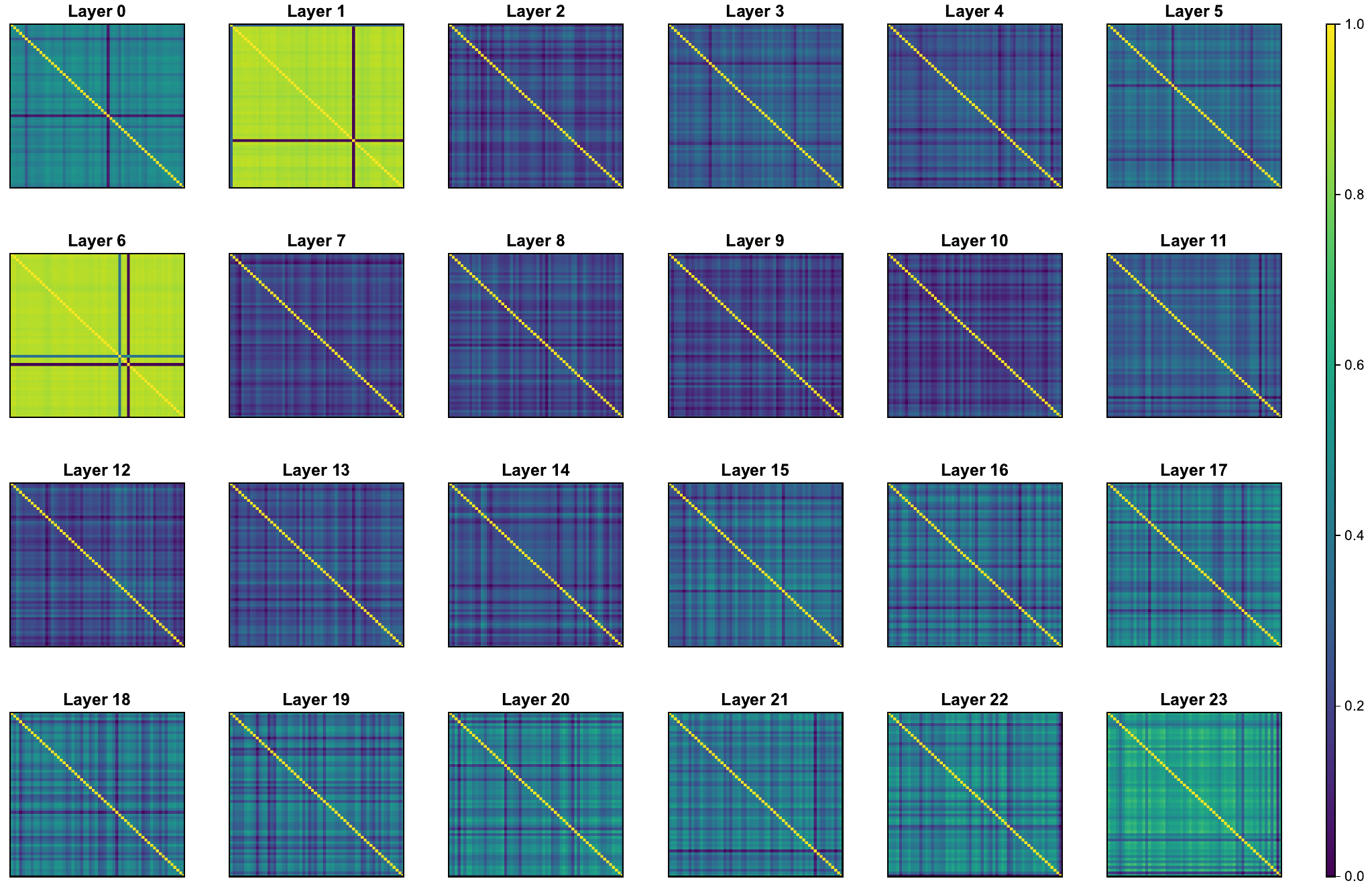}
    \caption{Visualization of expert similarity matrices of Qwen1.5-A2.7B model.}
    \label{fig:qwen1.5_sim_visual}
\end{figure*}

\begin{figure*}[!h]
    \raggedleft
    \includegraphics[width=.94\textwidth]{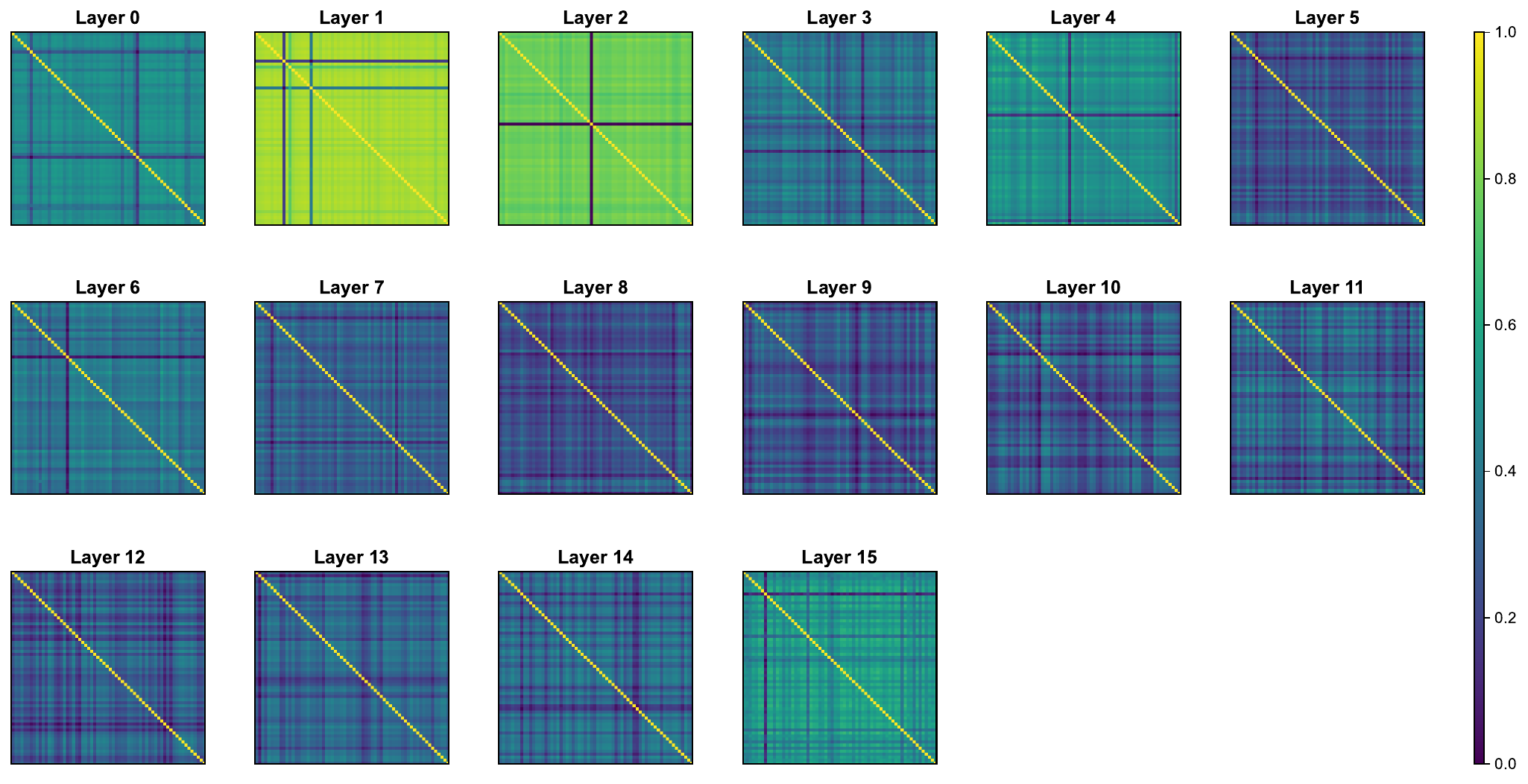}
    \caption{\rebuttal{Visualization of expert similarity matrices of OLMoE-1B-7B-0125-Instruct model.}}
    \label{fig:olmoe_sim_visual}
\end{figure*}

\begin{figure*}[!h]
    \raggedleft
    \includegraphics[width=.94\textwidth]{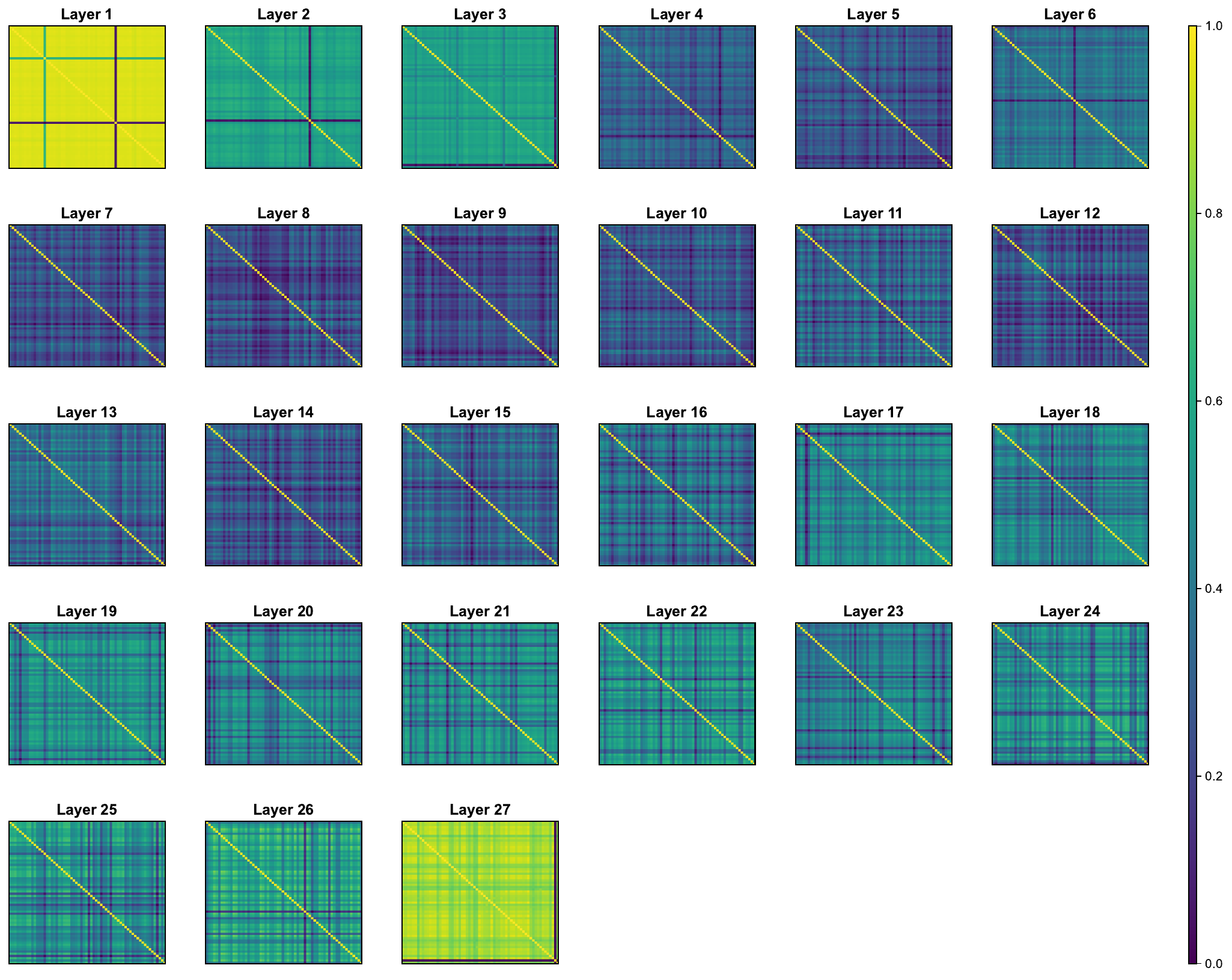}
    \caption{\rebuttal{Visualization of expert similarity matrices of DeepSeekMoE model.}}
    \label{fig:deepseek_moe_sim_visual}
\end{figure*}

\begin{figure*}[!h]
    \raggedleft
    \includegraphics[width=.94\textwidth]{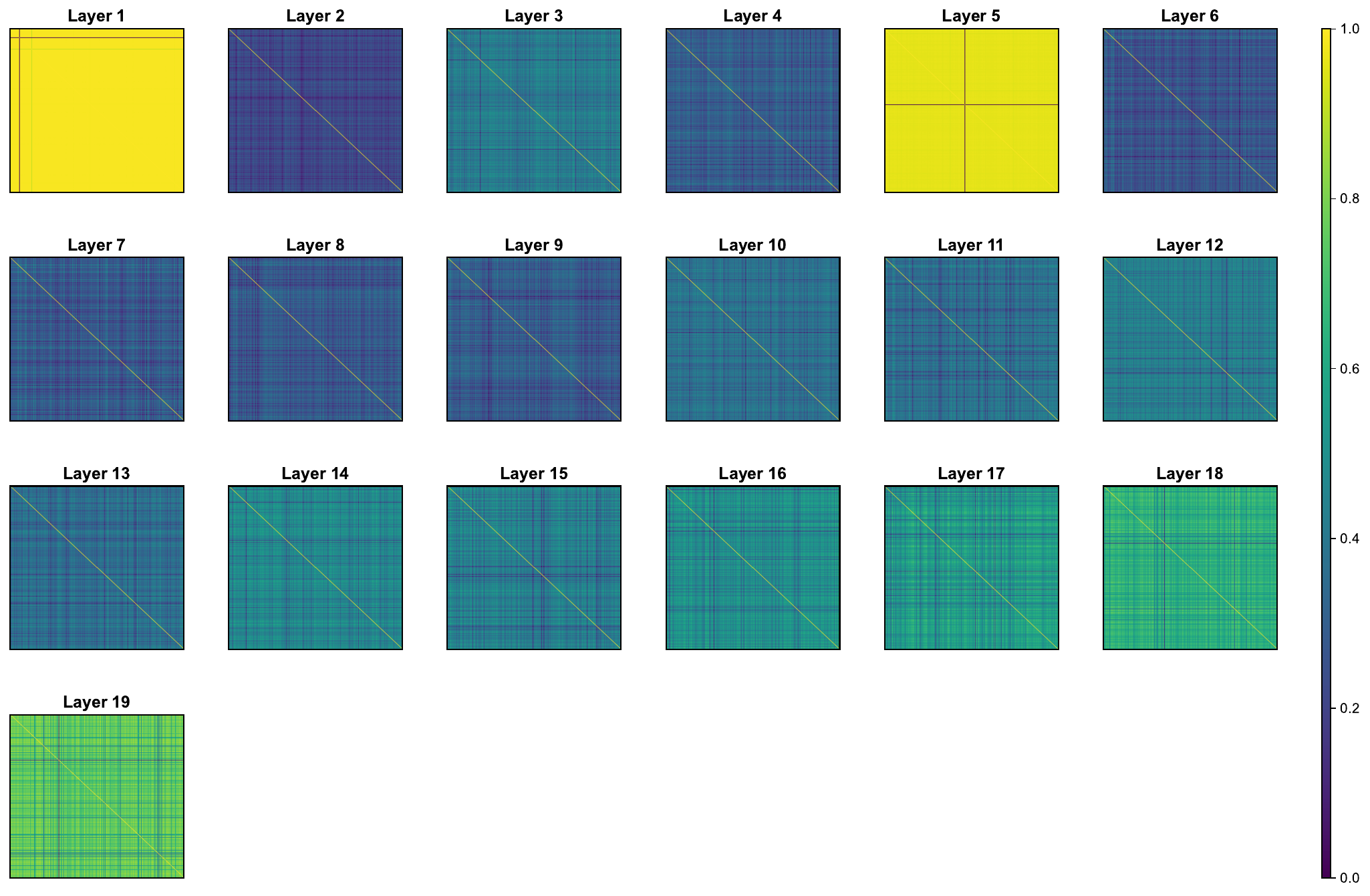}
    \caption{\rebuttal{Visualization of expert similarity matrices of Ling-mini-2.0 model.}}
    \label{fig:bailing_sim_visual}
\end{figure*}

\begin{figure*}[!h]
    \raggedleft
    \includegraphics[width=.94\textwidth]{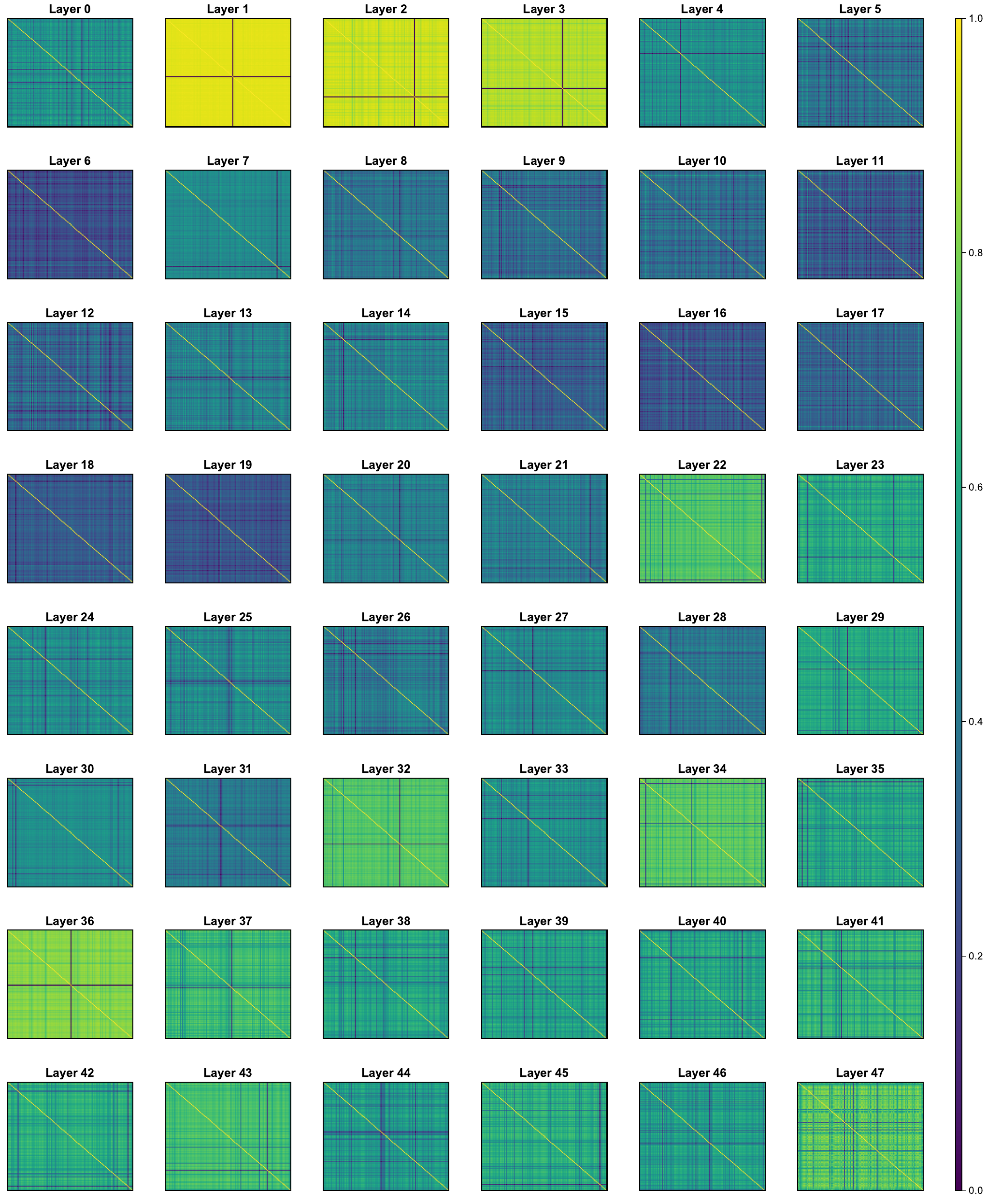}
    \caption{Visualization of expert similarity matrices of Qwen3-30B-A3B model.}
    \label{fig:qwen3_sim_visual}
\end{figure*}

\end{document}